\newcommand{\Cov}{\operatorname{Cov}}
\icmltitlerunning{Learning the Structure of Generative Models without Labeled Data}
\begin{document} 

\twocolumn[
\icmltitle{Learning the Structure of Generative Models without Labeled Data}

% It is OKAY to include author information, even for blind
% submissions: the style file will automatically remove it for you
% unless you've provided the [accepted] option to the icml2017
% package.

% list of affiliations. the first argument should be a (short)
% identifier you will use later to specify author affiliations
% Academic affiliations should list Department, University, City, Region, Country
% Industry affiliations should list Company, City, Region, Country

% you can specify symbols, otherwise they are numbered in order
% ideally, you should not use this facility. affiliations will be numbered
% in order of appearance and this is the preferred way.
\icmlsetsymbol{equal}{*}

\begin{icmlauthorlist}
\icmlauthor{Stephen H. Bach}{stanford}
\icmlauthor{Bryan He}{stanford}
\icmlauthor{Alexander Ratner}{stanford}
\icmlauthor{Christopher R\'e}{stanford}
\end{icmlauthorlist}

\icmlaffiliation{stanford}{Stanford University, Stanford, California}

\icmlcorrespondingauthor{Stephen Bach}{bach@cs.stanford.edu}

% You may provide any keywords that you 
% find helpful for describing your paper; these are used to populate 
% the "keywords" metadata in the PDF but will not be shown in the document
\icmlkeywords{structure learning, data programming, weak supervision}

\vskip 0.3in
]

% this must go after the closing bracket ] following \twocolumn[ ...

% This command actually creates the footnote in the first column
% listing the affiliations and the copyright notice.
% The command takes one argument, which is text to display at the start of the footnote.
% The \icmlEqualContribution command is standard text for equal contribution.
% Remove it (just {}) if you do not need this facility.

\printAffiliationsAndNotice{}  % leave blank if no need to mention equal contribution
%\printAffiliationsAndNotice{\icmlEqualContribution} % otherwise use the standard text.
%\footnotetext{hi}

%!TEX root = bach-icml17.tex

\begin{abstract} 

Curating labeled training data has become the primary bottleneck in machine learning.
Recent frameworks address this bottleneck with generative models to synthesize labels at scale from weak supervision sources.
The generative model's dependency structure directly affects the quality of the estimated labels, but selecting a structure automatically without any labeled data is a distinct challenge.
We propose a structure estimation method that maximizes the $\ell_1$-regularized marginal pseudolikelihood of the observed data.
Our analysis shows that the amount of unlabeled data required to identify the true structure scales sublinearly in the number of possible dependencies for a broad class of models.
Simulations show that our method is 100$\times$ faster than a maximum likelihood approach and selects $1/4$ as many extraneous dependencies.
We also show that our method provides an average of 1.5 F1 points of improvement over existing, user-developed information extraction applications on real-world data such as PubMed journal abstracts.
\end{abstract}

%!TEX root = bach-icml17.tex

\section{Introduction}

Supervised machine learning traditionally depends on access to labeled training data, a major bottleneck in developing new methods and applications.
In particular, deep learning methods require tens of thousands or more labeled data points for each specific task.
Collecting these labels is often prohibitively expensive, especially when specialized domain expertise is required, and major technology companies are investing heavily in hand-curating labeled training data \citep{wired16, time17}.
Aiming to overcome this bottleneck, there is growing interest in using generative models to synthesize training data from weak supervision sources such as heuristics, knowledge bases, and weak classifiers trained directly on noisy sources.
Rather than treating training labels as gold-standard inputs, such methods model training set creation as a process in order to generate training labels at scale.
The true class label for a data point is modeled as a latent variable that generates the observed, noisy labels.
After fitting the parameters of this generative model on unlabeled data, a distribution over the latent, true labels can be inferred.

The structure of such generative models directly affects the inferred labels, and prior work assumes that the structure is user-specified \citep{alfonseca:acl12, takamatsu:acl12, roth:emnlp13,ratner:nips16}.
One option is to assume that the supervision sources are conditionally independent given the latent class label.
However, statistical dependencies are common in practice, and not taking them into account leads to misjudging the accuracy of the supervision.
We cannot rely in general on users to specify the structure of the generative model, because supervising heuristics and classifiers might be independent for some data sets but not others.
We therefore seek an efficient method for automatically learning the structure of the generative model from weak supervision sources alone.

While structure learning in the supervised setting is well-studied \citep[e.g.,][see also Section~\ref{sec:related}]{meinshausen:annalsofstats06, zhao:jmlr06, ravikumar:annalsofstats10}, learning the structure of generative models for weak supervision is challenging because the true class labels are latent.
Although we can learn the parameters of generative models for a given structure using stochastic gradient descent and Gibbs sampling, modeling all possible dependencies does not scale as an alternative to model selection.
For example, estimating all possible correlations for a modestly sized problem of 100 weak supervision sources takes over 40 minutes.
(For comparison, our proposed approach solves the same problem in 15 seconds.)
As users develop their supervision heuristics, rerunning parameter learning to identify dependencies becomes a prohibitive bottleneck.

We propose an estimator to learn the dependency structure of a generative model without using any labeled training data.
Our method maximizes the $\ell_1$-regularized marginal pseudolikelihood of each supervision source's output independently, selecting those dependencies that have nonzero weights.
This estimator is analogous to maximum likelihood for logistic regression, except that we marginalize out our uncertainty about the latent class label.
Since the pseudolikelihood is a function of one free variable and marginalizes over one other variable, we compute the gradient of the marginal pseudolikelihood exactly, avoiding the need for approximating the gradient with Gibbs sampling, as is done for maximum likelihood estimation.

Our analysis shows that the amount of data required to identify the true structure scales sublinearly in the number of possible dependencies for a broad class of models.
Intuitively, this follows from the fact that learning the generative model's parameters is possible when there are a sufficient number of better-than-random supervision sources available.
With enough signal to estimate the latent class labels better than random guessing, those estimates can be refined until the model is identified.

We run experiments to confirm these predictions.
We also compare against the alternative approach of considering all possible dependencies during parameter learning.
We find that our method is 100$\times$ faster.
In addition, our method returns $1/4$ as many extraneous correlations on synthetic data when tuned for comparable recall.
Finally, we demonstrate that on real-world applications of weak supervision, using generative models with automatically learned dependencies improves performance.
We find that our method provides on average 1.5 F1 points of improvement over existing, user-developed information extraction applications on PubMed abstracts and hardware specification sheets.

%!TEX root = bach-icml17.tex

\section{Background}

When developing machine learning systems, the primary bottleneck is often curating a sufficient amount of labeled training data.
Hand labeling training data is expensive, time consuming, and often requires specialized knowledge.
Recently researchers have proposed methods for synthesizing labels from noisy label sources using generative models.
(See Section~\ref{sec:related} for a summary.)
We ground our work in one framework, data programming \cite{ratner:nips16}, that generalizes many approaches in the literature.

In data programming, weak supervision sources are encoded as \emph{labeling functions}, heuristics that label data points (or abstain).
A generative probabilistic model is fit to estimate the accuracy of the labeling functions and the strength of any user-specified statistical dependencies among their outputs.
In this model, the true class label for a data point is a latent variable that generates the labeling function outputs.
After fitting the parameters of the generative model, a distribution over the latent, true labels can be estimated and be used to train a discriminative model by minimizing the expected loss with respect to that distribution.

We formally describe this setup by first specifying for each data point $\x_\iy$ a latent random variable $\y_\iy \in \{-1, 1\}$ that is its true label.
For example, in an information extraction task, $\x_\iy$ might be a span of text.
Then, $\y_\iy$ can represent whether it is a mention of a company or not (entity tagging).
Alternatively, $\x_\iy$ might be a more complex structure, such as a tuple of canonical identifiers along with associated mentions in a document, and then $\y_\iy$ can represent whether a relation of interest over that tuple is expressed in the document (relation extraction).

We do not have access to $\y_\iy$ (even at training time), but we do have $\nlf$ user-provided labeling functions $\lf_1, \dots, \lf_\nlf$ that can be applied to $\x_\iy$ to produce outputs $\lfout_{\iy 1}, \dots, \lfout_{\iy \nlf}$.
For example, for the company-tagging task mentioned above, a labeling function might apply the regular expression {\tt .+\textbackslash sInc\textbackslash.} to a span of text and return whether it matched.
The domain of each $\lfout_{\iy \ilf}$ is $\{-1, 0, 1 \}$, corresponding to \emph{false}, \emph{abstaining}, and \emph{true}.
Generalizing to the multiclass case is straightforward.

Our goal is to estimate a probabilistic model that generates the labeling-function outputs $\lfout \in \{-1, 0, 1\}^{\ny \times \nlf}$.
A common assumption is that the outputs are conditionally independent given the true label, and that the relationship between $\lfout$ and $\y$ is governed by $\nlf$ \emph{accuracy} dependencies
\[
\dep^\acc_\ilf (\lfout_\iy, \y_\iy) \defeq ~ y_\iy \lfout_{\iy \ilf}
 \]
with a parameter $\param^\acc_\ilf$ modeling how accurate each labeling function $\lf_\ilf$ is.
We refer to this structure as the \emph{conditionally independent model}, and specify it as
\begin{equation}
\label{eq:ci}
\p_\param(\lfout, \Y) \propto \exp \left( \sum_{\iy = 1}^\ny \sum_{\ilf = 1}^\nlf \param^\acc_\ilf \dep^\acc_\ilf (\lfout_\iy, \y_\iy) \right)~,
\end{equation}
where $\Y \defeq \y_1, \dots, \y_\ny$.

We estimate the parameters $\param$ by minimizing the negative log marginal likelihood $\p_\param(\lfoutobs)$ for an observed matrix of labeling function outputs $\lfoutobs$:
\begin{equation}
\label{eq:mle}
\argmin_\param ~ - \log \sum_\Y \p_\param(\lfoutobs, \Y)~.
\end{equation}

Optimizing the likelihood is straightforward using stochastic gradient descent.
The gradient of objective~(\ref{eq:mle}) with respect to parameter $\param^\acc_\ilf$ is 
\[
\sum_{\iy = 1}^\ny \left(
\E_{\lfout, \Y \sim \param} \left[ \dep^\acc_\ilf (\lfout_\iy, \y_\iy) \right]
-
\E_{\Y \sim \param | \lfoutobs} \left[ \dep^\acc_\ilf (\lfoutobs_\iy, \y_\iy) \right]
\right)~,
\]
the difference between the corresponding sufficient statistic of the joint distribution $p_\param$ and the same distribution conditioned on $\lfoutobs$.
In practice, we can interleave samples to estimate the gradient and gradient steps very tightly, taking a small step after each sample of each variable $\lfout_{\iy \ilf}$ or $\y_\iy$, similarly to contrastive divergence \cite{hinton:neuralcomp02}.

The conditionally independent model is a common assumption, and using a more sophisticated generative model currently requires users to specify its structure.
In the rest of the paper, we address the question of automatically identifying the dependency structure from the observations $\lfoutobs$ without observing $\Y$.

%!TEX root = bach-icml17.tex

\section{Structure Learning without Labels}
\label{sec:method}

Statistical dependencies arise naturally among weak supervision sources.
In data programming, users often write labeling functions with directly correlated outputs or even labeling functions deliberately designed to reinforce others with narrow, more precise heuristics.
To address this issue, we generalize the conditionally independent model as a factor graph with additional dependencies, including higher-order factors that connect multiple labeling function outputs for each data point $\x_\iy$ and label $\y_\iy$.
We specify the general model as
\begin{equation}
\label{eq:general}
\p_\param(\lfout, \Y) \propto \exp \left( \sum_{\iy = 1}^\ny \sum_{\deptype \in \alldeptypes} \sum_{\lfsubset \in \allsubsets_\deptype} \param^\deptype_\lfsubset \dep^\deptype_\lfsubset(\lfout_\iy, \y_\iy) \right)~.
\end{equation}
Here $\alldeptypes$ is the set of dependency types of interest, and $\allsubsets_\deptype$ is a set of index tuples, indicating the labeling functions that participate in each dependency of type $\deptype \in \alldeptypes$.
We start by defining standard \emph{correlation} dependencies of the form
\[
\dep^\cor_{\ilf \ilfalt} (\lfout_\iy, \y_\iy) \defeq ~ \mathbbm{1}\{ \lfout_{\iy \ilf} = \lfout_{\iy \ilfalt} \}~.
\]
We refer to such dependencies as pairwise among labeling functions because they depend only on two labeling function outputs.
We can also consider higher-order dependencies that involve more variables, such as \emph{conjunction} dependencies of the form
\[
\dep^\text{And}_{\ilf \ilfalt} (\lfout_\iy, \y_\iy) \defeq ~ \mathbbm{1}\{ \lfout_{\iy \ilf} = \y_\iy \wedge \lfout_{\iy \ilfalt} = \y_\iy\}~.
\]

Estimating the structure of the distribution $\p_\param(\lfout, \Y)$ is challenging because $\Y$ is latent; we never observe its value, even during training.
We must therefore work with the marginal likelihood $\p_\param(\lfout)$.
Learning the parameters of the generative model jointly requires Gibbs sampling to estimate gradients.
As the number of possible dependencies increases at least quadratically in the number of labeling functions, this heavyweight approach to learning does not scale (see Section~\ref{sec:comparison}).

\subsection{Learning Objective}

We can scale up learning over many potentially irrelevant dependencies by optimizing a different objective: the log marginal pseudolikelihood of the outputs of a single labeling function $\lf_\ilf$, i.e., conditioned on the outputs of the others $\lf_{\setminus \ilf}$, using $\ell_1$ regularization to induce sparsity.
The objective is
\begin{align}
& \argmin_{\param} ~ -\log \p_\param(\lfoutobs_\ilf \mid \lfoutobs_{\setminus \ilf}) + \threshold \|\param\|_1 \label{eq:pseudo} \\
= ~ &\argmin_{\param} ~ -\sum_{\iy=1}^\ny  \log \sum_{\y_\iy} \p_\param(\lfoutobs_{\iy \ilf}, \y_\iy \mid \lfoutobs_{\iy \setminus \ilf}) + \threshold \|\param\|_1, \notag
\end{align}
where $\threshold > 0$ is a hyperparameter.

By conditioning on all other labeling functions in each term $\log \sum_{\y_\iy} \p_\param(\lfoutobs_{\iy \ilf}, \y_\iy \mid \lfoutobs_{\iy \setminus \ilf})$, we ensure that the gradient can be computed in polynomial time with respect to the number of labeling functions, data points, and possible dependencies; without requiring any sampling or variational approximations.
The gradient of the log marginal pseudolikelihood is the difference between two expectations: the sufficient statistics conditioned on all labeling functions but $\lf_\ilf$, and conditioned on all labeling functions:
\begin{align}
& - \frac{\partial \log \p(\lfoutobs_\ilf \mid \lfoutobs_{\setminus \ilf})}{\partial \param^\deptype_\lfsubset} = \alpha - \beta, \label{eq:gradient}
\end{align}
where
\begin{align*}
\alpha &\defeq \sum_{\iy=1}^\ny \sum_{\lfout_{\iy \ilf}, \y_\iy} \p_\param(\lfout_{\iy \ilf}, \y_\iy \mid \lfoutobs_{\iy \setminus \ilf}) \dep^\deptype_\lfsubset((\lfout_{\iy \ilf}, \lfoutobs_{\iy \setminus \ilf}), \y_\iy) \notag \\
\beta &\defeq \sum_{\iy=1}^\ny \sum_{\y_\iy} \p(\y_\iy \mid \lfoutobs_\iy) \dep^\deptype_\lfsubset(\lfoutobs_\iy, \y_\iy). \notag
\end{align*}
Note that in the definition of $\alpha$, $\dep^\deptype_\lfsubset$ operates on the value of $\lfout_{\iy \ilf}$ set in the summation and the observed values of $\lfoutobs_{\iy \setminus \ilf}$.

We optimize for each labeling function $\lf_\ilf$ in turn, selecting those dependencies with parameters that have a sufficiently large magnitude and adding them to the estimated structure.

\subsection{Implementation}

\begin{algorithm}[tb]
   \caption{Structure Learning for Data Programming}
   \label{alg:method}
\begin{algorithmic}
   \vskip 0.2em
   \STATE {\bfseries Input:} Observations $\lfoutobs \in \{-1, 0, 1\}^{\ny \times \nlf}$, threshold $\threshold$,
   distribution $p$ with parameters $\param$, initial parameters $\param^0$, \\ step size $\stepsize$, epoch count $\epochs$, truncation frequency $\truncation$
   \vskip 0.4em
   \STATE $\depset \leftarrow \emptyset$
   \FOR{$\ilf = 1$ {\bfseries to} $\nlf$}
   \STATE $\param \leftarrow \param^0$
   \FOR{$\epoch = 1$ {\bfseries to} $\epochs$}
   \FOR{$\iy = 1$ {\bfseries to} $\ny$}
   \vskip 0.4em
   \FOR{$\param^\deptype_\lfsubset$ {\bfseries in} $\param$}
   \vskip 0.1em
   \STATE $\alpha \leftarrow \sum_{\lfout_{\iy \ilf}, \y_\iy} \p(\lfout_{\iy \ilf}, \y_\iy | \lfoutobs_{\iy \setminus \ilf}) \dep^\deptype_\lfsubset((\lfout_{\iy \ilf}, \lfoutobs_{\iy \setminus \ilf}), \y_\iy)$
   \vskip 0.1em
   \STATE $\beta \leftarrow \sum_{\y_\iy} \p(\y_\iy \mid \lfoutobs_\iy) \dep^\deptype_\lfsubset(\lfoutobs_\iy, \y_\iy)$
   \vskip 0.1em
   \STATE $\param^\deptype_\lfsubset \leftarrow \param^\deptype_\lfsubset - \stepsize (\alpha  - \beta)$
   \vskip 0.1em
   \ENDFOR
   \vskip 0.4em
   \IF {$\epoch \ny + \iy$ {\bfseries mod} $\truncation$ {\bfseries is} $0$}
   \FOR{$\param^\deptype_\lfsubset$ {\bfseries in} $\param$ {\bfseries where} $\param^\deptype_\lfsubset > 0$}
   \STATE $\param^\deptype_\lfsubset \leftarrow \max \{0, \param^\deptype_\lfsubset - \truncation \stepsize \threshold \}$
   \ENDFOR
   \FOR{$\param^\deptype_\lfsubset$ {\bfseries in} $\param$ {\bfseries where} $\param^\deptype_\lfsubset < 0$}
   \STATE $\param^\deptype_\lfsubset \leftarrow \min \{0, \param^\deptype_\lfsubset + \truncation \stepsize \threshold \}$
   \ENDFOR
   \ENDIF
   \vskip 0.4em
   \ENDFOR
   \ENDFOR
   \vskip 0.4em
   \FOR{$\param^\deptype_\lfsubset$ {\bfseries in} $\param$ {\bfseries where} $\ilf \in \lfsubset$}
   \vskip 0.1em
   \IF{$| \param^\deptype_{\lfsubset} | > \threshold$} 
   \vskip 0.1em
   \STATE $\depset \leftarrow \depset \cup \{(\lfsubset, \deptype) \}$
   \ENDIF
   \ENDFOR
   \ENDFOR
   \RETURN $\depset$
\end{algorithmic}
\end{algorithm}

We implement our method as Algorithm~\ref{alg:method}, a stochastic gradient descent (SGD) routine.
At each step of the descent, the gradient~(\ref{eq:gradient}) is estimated for a single data point, which can be computed in closed form.
Using SGD has two advantages.
First, it requires only first-order gradient information.
Other methods for $\ell_1$-regularized regression like interior-point methods \citep{koh:jmlr07} usually require computing second-order information.
Second, the observations $\lfoutobs$ can be processed incrementally.
Since data programming operates on unlabeled data, which is often abundant, scalability is crucial.
To implement $\ell_1$ regularization as part of SGD, we use an online truncated gradient method \citep{langford:jmlr09}.

In practice, we find that the only parameter that requires tuning is $\threshold$, which controls the threshold and regularization strength.
Higher values induce more sparsity in the selected structure.
For the other parameters, we use the same values in all of our experiments: step size $\stepsize = {\ny}^{-1}$, epoch count $\epochs = 10$, and truncation frequency $\truncation = 10$.

%!TEX root = bach-icml17.tex

\section{Analysis}
\label{sec:analysis}

We provide guarantees on the probability that Algorithm \ref{alg:method} successfully recovers the exact dependency structure.
We first provide a general recovery guarantee for all types of possible dependencies, including both pairwise and higher-order dependencies.
However, in many cases, higher-order dependencies are not necessary to model the behavior of the labeling functions.
In fact, as we demonstrate in Section~\ref{sec:realworld}, in many useful models there are only accuracy dependencies and pairwise correlations.
In this case, we show as a corollary to our general result that the number of samples required is sublinear in the number of possible dependencies, specifically $O(\nlf \log \nlf)$.

Previous analyses for the supervised case do not carry over to the unsupervised setting because the problem is no longer convex.
For example, analysis of an analogous method for supervised Ising models \citep{ravikumar:annalsofstats10} relies on Lagrangian duality and a tight duality gap, which does not hold for our estimation problem.
Instead, we reason about a region of the parameter space in which we can estimate $\Y$ well enough that we can eventually approach the true model.

We now state the conditions necessary for our guarantees.
First are two standard conditions that are needed to guarantee that the dependency structure can be recovered with any number of samples.
One, we must have some set $\Theta\subset\mathbf{R}^M$ of feasible parameters.
Two, the true model is in $\Theta$, i.e., there exists some choice of $\param^*\in\Theta$ such that
\begin{equation} \label{eq: well-specified}
\begin{split}
&\pi^*(\lfout, \Y) = \p_{\param^*}(\lfout, \Y), \\
  &\forall \lfout\in\{-1,0,1\}^{\ny \times \nlf},\Y\in\{-1,1\}^\ny
\end{split}
\end{equation}
where $\pi^*$ is the true distribution.

Next, let $\Dep_\ilf$ denote the set of dependencies that involve either labeling function $\lf_\ilf$ or the true label $\y$.
For any feasible parameter $\theta\in\Theta$ and $\ilf\in \{1,\ldots, \nlf\}$, there must exist $c>0$ such that
\begin{equation} \label{eq: cov}
  \begin{split}
    cI &{}+ \sum_{\iy=1}^{\ny}\Cov_{(\lfout, Y)\sim p_{\theta}}(\Dep_\ilf(\lfout, Y)\mid \lfout_\iy = \lfoutobs_\iy) \\
  &\preceq
    \sum_{\iy=1}^{\ny}\Cov_{(\lfout, Y)\sim p_{\theta}}(\Dep_\ilf(\lfout, Y)\mid \lfout_{\iy\setminus \ilf} = \lfoutobs_{\iy\setminus \ilf}). \\
  \end{split}
\end{equation}
This means that for each labeling function, we have a better estimate of the dependencies with the labeling function than without.
It is analogous to assumptions made to analyze parameter learning in data programming.

Finally, we require that all non-zero parameters be bounded away from zero.
That is, for all $\param_i\neq 0$, and some $\kappa > 0$, we have that
\begin{equation}
  \label{eq: notzero}
  |\param_i| \geq \kappa.
\end{equation}

Under these conditions, we are able to provide guarantees on the probability of finding the correct dependency structure.
First, we present guarantees for all types of possible dependencies in Theorem \ref{thmsparsistency}, proved in Appendix~\ref{app: thm}.
For this theorem, we define $\ndep_\ilf$ to be the number of possible dependencies involving either $\lfout_\ilf$ or $\y$, and we define $\ndep$ as the largest of $\ndep_1,\ldots,\ndep_\nlf$.
\begin{restatable}{theorem}{thmsparsistency}
  \label{thmsparsistency}
  Suppose we run Algorithm \ref{alg:method} on a problem where conditions 
  \eqref{eq: well-specified}, \eqref{eq: cov}, and \eqref{eq: notzero} are satisfied.
  Then, for any $\delta > 0$, an unlabeled input dataset of size
  \begin{align*}
  m \geq \frac{32\ndep}{c^2\kappa^2}\log\left(\frac{2n\ndep}{\delta}\right)
  \end{align*}
  is sufficient to recover the exact dependency structure with a probability of at least $1 - \delta$.
\end{restatable}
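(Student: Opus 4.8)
The plan is to use the fact that Algorithm~\ref{alg:method} decomposes into $\nlf$ independent optimizations, one per labeling function, to treat each as an $\ell_1$-regularized $M$-estimation problem, and to combine the per-subproblem failure probabilities by a union bound --- this is what produces the $\log(2\nlf\ndep/\delta)$ factor. Fix $\lf_\ilf$; write $\widehat L_\ilf(\param)$ for the (per-example averaged) negative log marginal pseudolikelihood in~\eqref{eq:pseudo}, and $L_\ilf(\param)\defeq\E[\widehat L_\ilf(\param)]$ for its population version, the expectation being over the $m$ i.i.d.\ rows $\lfout_\iy\sim\pi^*$. The first step is to verify that $\param^*$ is a stationary point of $L_\ilf$, i.e.\ $\nabla L_\ilf(\param^*)=0$. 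Writing the gradient as $\alpha-\beta$ as in~\eqref{eq:gradient} and invoking well-specification~\eqref{eq: well-specified}, both $\E_{\lfout\sim\pi^*}[\alpha]$ and $\E_{\lfout\sim\pi^*}[\beta]$ reduce to the same unconditional expectation $\E_{\lfout,\Y\sim\param^*}[\dep^\deptype_\lfsubset]$: in $\alpha$ the value of $\lfout_{\iy\ilf}$ is re-averaged against the true conditional, whereas in $\beta$ it is pinned to its observed value but the outer average over $\lfout\sim\p_{\param^*}$ restores it. The inner sum over the latent $\y_\iy$ is computed exactly, so no approximation enters this step.

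The second step supplies curvature. Differentiating $\alpha-\beta$ a second time, the Hessian of a single term $-\log\sum_{\y_\iy}\p_\param(\lfoutobs_{\iy\ilf},\y_\iy\mid\lfoutobs_{\iy\setminus\ilf})$ equals a difference of conditional covariances, and marginalizing $\y_\iy$ is what contributes the subtracted (concave) term; summed over $\iy$ and restricted to the coordinates in $\Dep_\ilf$, this Hessian is precisely the right-hand side of~\eqref{eq: cov} minus the covariance part of its left-hand side. Hence assumption~\eqref{eq: cov} states that the objective, restricted to $\Dep_\ilf$, is $c$-strongly convex at \emph{every} feasible $\param\in\Theta$, even though the full objective is in general non-convex. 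This ``restricted strong convexity holding globally over $\Theta$'' is the structural ingredient that substitutes for the Lagrangian-duality/tight-gap argument of \citet{ravikumar:annalsofstats10}, which does not transfer to the latent-variable setting.

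The third step is concentration of the empirical gradient. Each per-example summand of~\eqref{eq:gradient} lies in $[-2,2]$, since the factors $\dep^\deptype_\lfsubset$ are $\{-1,0,1\}$-valued and the conditional probabilities sum to one; Hoeffding's inequality therefore gives $\Pr\bigl[\,\|\nabla\widehat L_\ilf(\param^*)-\nabla L_\ilf(\param^*)\|_\infty>t\,\bigr]\le 2\ndep\exp(-mt^2/8)$ for each subproblem, and a union bound over the $\nlf$ labeling functions yields, with probability at least $1-\delta$ and simultaneously for all $\ilf$, the bound $\|\nabla\widehat L_\ilf(\param^*)\|_\infty\le\sqrt{(8/m)\log(2\nlf\ndep/\delta)}$.

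Finally I would put the pieces together. Standard convergence guarantees for the truncated-gradient scheme \citep{langford:jmlr09}, run with the algorithm's fixed step size, epoch count, and truncation period, drive the iterates to the minimizer $\widehat\param$ of the $\threshold$-regularized empirical objective~\eqref{eq:pseudo}; the first-order optimality condition at $\widehat\param$, the restricted strong convexity of Step~2, and the inequality $\|v\|_1\le\sqrt{\ndep}\,\|v\|_2$ applied to the (at most $\ndep$)-sparse difference $\widehat\param-\param^*$ give $\|\widehat\param-\param^*\|_2\le c^{-1}\sqrt{\ndep}\bigl(\|\nabla\widehat L_\ilf(\param^*)\|_\infty+\threshold\bigr)$ on the coordinates of $\Dep_\ilf$. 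Substituting Step~3 and the stated $m\ge 32\ndep c^{-2}\kappa^{-2}\log(2\nlf\ndep/\delta)$, with $\threshold$ taken small enough, forces $\|\widehat\param-\param^*\|_\infty<\kappa/2$, so by~\eqref{eq: notzero} every true parameter (magnitude at least $\kappa$) survives the threshold while every absent one is pruned; a union over $\ilf$ then recovers the exact structure with probability at least $1-\delta$. I expect the main obstacle to be reconciling Steps~2 and~4 carefully: because the population objective is genuinely non-convex, one cannot merely equate empirical and population minimizers, and one must deploy~\eqref{eq: cov} as a \emph{global} curvature certificate on $\Theta$ to argue both that the stochastic iterates do not drift away from $\param^*$ along $\Dep_\ilf$ and that the $\ell_1$ truncation never destroys the true support.
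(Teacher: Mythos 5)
Your proposal follows essentially the same route as the paper's proof: per-labeling-function decomposition, stationarity of the population gradient at $\param^*$ via \eqref{eq: well-specified}, the Hessian-as-difference-of-conditional-covariances argument making \eqref{eq: cov} a restricted strong convexity certificate, Hoeffding/Azuma concentration of the empirical gradient with the $[-2,2]$ bound and the $2\ndep\exp(-mt^2/8)$ tail, the $\sqrt{\ndep}$ norm conversion to get $\|\hat\param-\param^*\|_\infty\le\kappa/2$, thresholding via \eqref{eq: notzero}, and a union bound over the $\nlf$ subproblems. The only (cosmetic) difference is that you analyze the minimizer of the $\threshold$-regularized objective with $\threshold$ taken small, whereas the paper analyzes the unregularized pseudolikelihood minimizer and applies a single truncation of size $\kappa/2$ at the end; both treat the SGD convergence itself informally.
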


For general dependencies, $d$ can be as large as the number of possible dependencies due to the fact that higher-order dependencies can connect the true label and many labeling functions.
The rate of Theorem~\ref{thmsparsistency} rate is therefore not directly comparable to that of \citet{ravikumar:annalsofstats10}, which applies to Ising models with pairwise dependencies.

As we demonstrate in Section~\ref{sec:realworld}, however, real-world applications can be improved by modeling just pairwise correlations among labeling functions.
If only considering these dependencies, then $\ndep$ will only be $2\nlf - 1$, rather than the number of potential dependencies.
In Corollary \ref{corsublinear}, we show that a number of samples needed in this case is $O(\nlf\log\nlf)$.
Notice that this is sublinear in the number of possible dependencies, which is $O(n^2)$.
\begin{restatable}{corollary}{corsublinear}
  \label{corsublinear}
  Suppose we run Algorithm \ref{alg:method} on a problem where conditions 
  \eqref{eq: well-specified}, \eqref{eq: cov}, and \eqref{eq: notzero} are satisfied.
  Additionally, assume that the only potential dependencies are accuracy and correlation dependencies.
  Then, for any $\delta > 0$, an unlabeled input dataset of size
  \begin{align*}
    m \geq \frac{64\nlf}{c^2\kappa^2}\log\left(\frac{4\nlf}{\delta}\right)
  \end{align*}
  is sufficient to recover the exact dependency structure with a probability of at least $1 - \delta$.
\end{restatable}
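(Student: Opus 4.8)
The plan is to read the corollary off from Theorem~\ref{thmsparsistency} by evaluating $\ndep$ for the restricted model class in which the only admissible dependencies are the $\nlf$ accuracy dependencies $\dep^\acc_\ilf$ and the $\Theta(\nlf^2)$ pairwise correlation dependencies $\dep^\cor_{\ilf\ilfalt}$. Conditions~\eqref{eq: well-specified}, \eqref{eq: cov}, and~\eqref{eq: notzero} are assumed, so Theorem~\ref{thmsparsistency} applies as is and already guarantees exact recovery with probability at least $1-\delta$ once the sample size reaches $\frac{32\ndep}{c^2\kappa^2}\log\frac{2\nlf\ndep}{\delta}$; the whole task is to bound $\ndep=\max_\ilf\ndep_\ilf$ in this setting and substitute.

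The single substantive step is the count of $\Dep_\ilf$. Fix a labeling function $\lf_\ilf$. Every accuracy dependency $\dep^\acc_\ilfalt(\lfout_\iy,\y_\iy)=\y_\iy\lfout_{\iy\ilfalt}$ is incident to the latent label $\y$, so all $\nlf$ of them lie in $\Dep_\ilf$, regardless of $\ilf$. A correlation dependency $\dep^\cor_{jk}$ involves $\lfout_{\iy j}$ and $\lfout_{\iy k}$ but not $\y$, so it lies in $\Dep_\ilf$ exactly when $\ilf\in\{j,k\}$, which holds for $\nlf-1$ of them. Hence $\ndep_\ilf=\nlf+(\nlf-1)=2\nlf-1$ for every $\ilf$, and $\ndep=2\nlf-1\le 2\nlf$. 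Substituting $\ndep\le 2\nlf$ into the requirement of Theorem~\ref{thmsparsistency} and discharging the remaining routine arithmetic on the logarithmic factor (using, e.g., $2\nlf-1\le 2\nlf$ in the prefactor and $\delta\le 1$) gives the stated $m\ge\frac{64\nlf}{c^2\kappa^2}\log\frac{4\nlf}{\delta}$, and in particular a sample complexity of $O(\nlf\log\nlf)$---sublinear in the $\Theta(\nlf^2)$ candidate correlations.

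I do not expect a real obstacle: this is a genuine corollary with a short proof. The only place to be careful is the bookkeeping for $\Dep_\ilf$---one must remember that the accuracy dependencies contribute $\nlf$ terms, not one, since each is incident to $\y$---and, more importantly for the qualitative message, that it is exactly the exclusion of higher-order dependencies (which could otherwise attach the latent label to arbitrarily many labeling functions at once) that keeps $\ndep$ linear in $\nlf$ rather than quadratic, so that the generic $O(\ndep\log(\nlf\ndep))$ guarantee of Theorem~\ref{thmsparsistency} collapses to $O(\nlf\log\nlf)$.
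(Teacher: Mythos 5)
Your proposal is correct and takes essentially the same route as the paper: the paper's proof likewise counts $\ndep_\ilf = (\nlf-1) + \nlf = 2\nlf-1 < 2\nlf$ (the $\nlf-1$ correlations touching $\lf_\ilf$ plus the $\nlf$ accuracy dependencies incident to $\y$) and then applies Theorem~\ref{thmsparsistency}. The only caveat---shared equally by the paper's own two-line argument---is that literally substituting $\ndep \le 2\nlf$ into Theorem~\ref{thmsparsistency} produces a $\log(4\nlf^2/\delta)$ factor rather than the stated $\log(4\nlf/\delta)$, so the final "routine arithmetic" step is glossed in both your write-up and the original.
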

In this case, we see the difference in analyses between the unsupervised and supervised settings.
Whereas the rate of Corollary~\ref{corsublinear} depends on the maximum number of dependencies that could affect a variable in the model class, the rate of \citet{ravikumar:annalsofstats10} depends cubically on the maximum number of dependencies that actually affect any variable in the true model and only logarithmically in the maximum possible degree.
In the supervised setting, the guaranteed rate is therefore tighter for very sparse models.
However, as we show in Section~\ref{sec:samplecomplexity}, the guaranteed rates in both settings are pessimistic, and in practice they appear to scale at the same rate.

%!TEX root = bach-icml17.tex

\section{Experiments}
\label{sec:experiments}

We implement our method as part of the open source framework Snorkel\footnote{\url{snorkel.stanford.edu}} and evaluate it in three ways.
First, we measure how the probability of returning the exact correlation structure is affected by the problem parameters using synthetic data, confirming our analysis that its sample complexity is sublinear in the number of possible dependencies.
In fact, we find that in practice the sample complexity is lower than the theoretically guaranteed rate, matching the rate seen in practice for fully supervised structure learning.
Second, we compare our method to estimating structures via parameter learning over all possible dependencies.
We demonstrate using synthetic data that our method is 100$\times$ faster and more accurate, selecting $1/4$ as many extraneous correlations on average.
Third, we apply our method to real-world applications built using data programming, such as information extraction from PubMed journal abstracts and hardware specification sheets.
In these applications, users did not specify any dependencies between the labeling functions they authored;
however, as we detail in Section~\ref{sec:realworld}, these dependencies naturally arise, for example due to
explicit composing, relaxing, or tightening of labeling function heuristics; related distant supervision sources;
or multiple concurrent developers writing labeling functions.
We show that learning this structure improves performance over the conditionally independent model, giving an average 1.5 F1 point boost.

\subsection{Sample Complexity}
\label{sec:samplecomplexity}

\begin{figure}[t]
\begin{center}
\centerline{\includegraphics[width=.75\columnwidth]{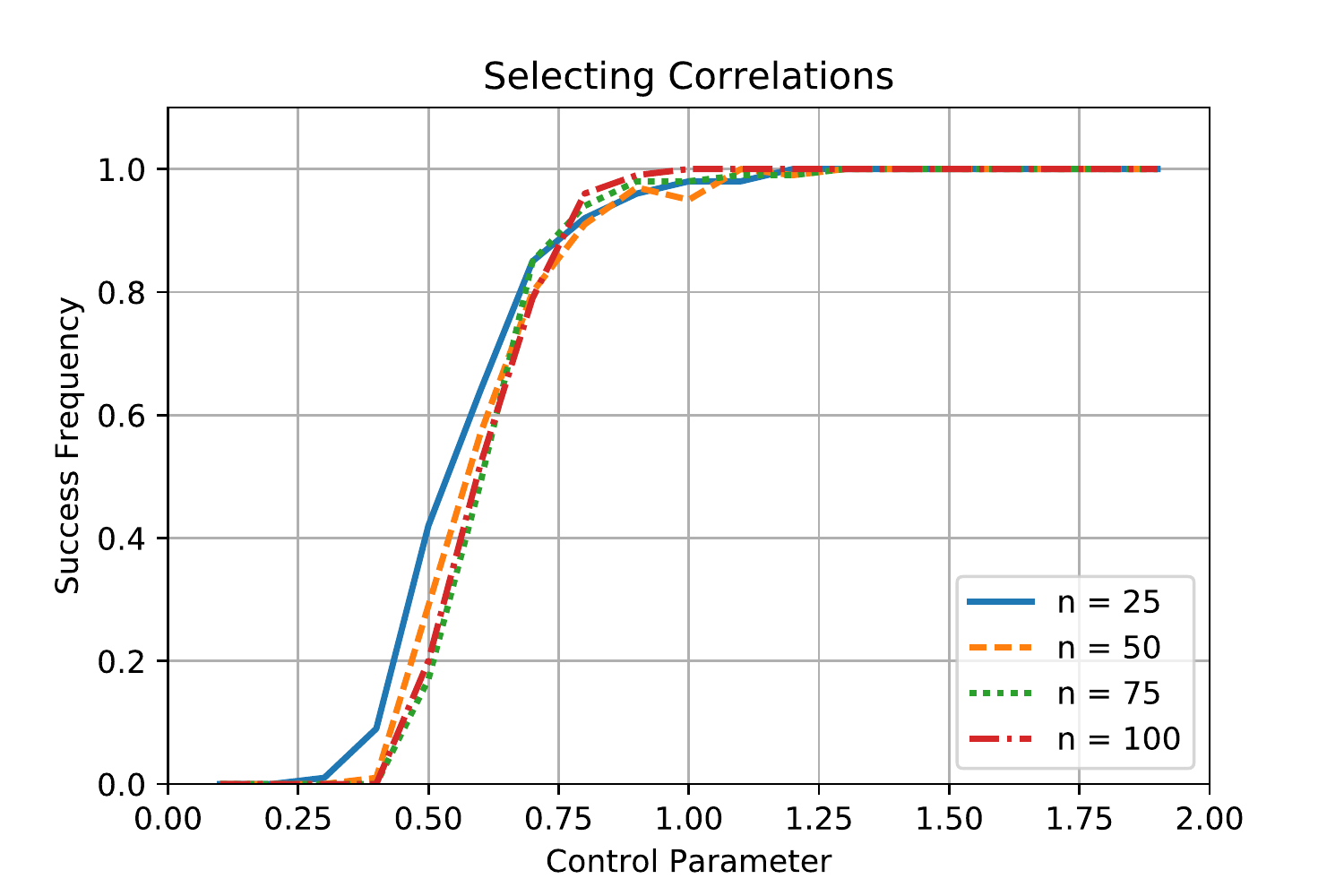}}
\caption{Algorithm~\ref{alg:method} returns the true structure consistently when the control parameter $\control$ reaches 1.0 for the number of samples defined by~(\ref{eq:samples}).
The number of samples required to identify a model in practice scales logarithmically in $\nlf$, the number of labeling functions.}
\label{fig:scaling_lfs}
\end{center}
\vskip -0.2in
\end{figure}

We test how the probability that Algorithm~\ref{alg:method} returns the correct correlation structure depends on the true distribution.
Our analysis in Section~\ref{sec:analysis} guarantees that the sample complexity grows at worst on the order $O(\nlf \log \nlf)$ for $\nlf$ labeling functions.
In practice, we find that structure learning performs better than this guaranteed rate, depending linearly on the number of true correlations and logarithmically on the number of possible correlations.
This matches the observed behavior for fully supervised structure learning for Ising models \citep{ravikumar:annalsofstats10}, which is also tighter than the best known theoretical guarantees.

To demonstrate this behavior, we attempt to recover the true dependency structure using a number of samples defined as
\begin{equation}
\label{eq:samples}
\ny \defeq 750 \cdot \control \cdot \truemaxdeg \cdot \log \nlf
 \end{equation}
where $\truemaxdeg$ is the maximum number of dependencies that affect any one labeling function.
For example, in the conditionally independent model $\truemaxdeg = 1$ and in a model with one correlation $\truemaxdeg = 2$.
We vary the control parameter $\control$ from $0.1$ to $2.0$ to determine the point at which $\ny$ is sufficiently large for Algorithm~\ref{alg:method} to recover the true dependency structure.
(The constant $750$ was selected so that it succeeds with high probability around $\control=1.0$.)

We first test the effect of varying $\nlf$, the number of labeling functions.
For $\nlf \in \{25, 50, 75, 100\}$, we set two pairs of labeling functions to be correlated with $\param^\text{Cor}_{\ilf \ilfalt} = 0.25.$
We set $\param^\text{\acc}_{\ilf} = 1.0$ for all $\ilf$.
We then generate $\ny$ samples for each setting of $\control$ over 100 trials.
Figure~\ref{fig:scaling_lfs} shows the fraction of times Algorithm~\ref{alg:method} returns the correct correlation structure as a function of the control parameter $\control$.
That the curves are aligned for different values of $\nlf$ shows that the sample complexity in practice scales logarithmically in $\nlf$.

\begin{figure}[t]
\begin{center}
\centerline{\includegraphics[width=.75\columnwidth]{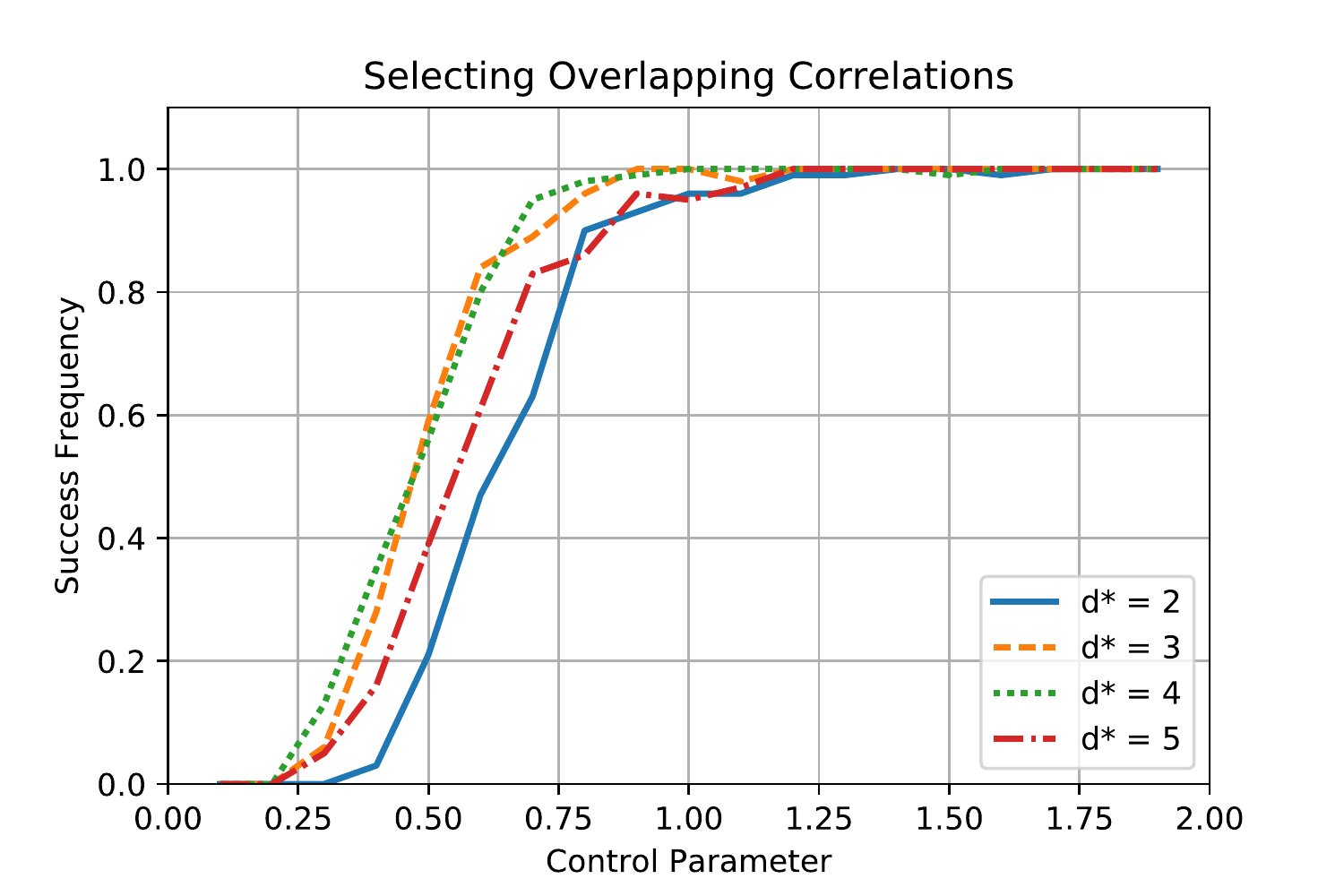}}
\caption{Algorithm~\ref{alg:method} returns the true structure consistently when the control parameter $\control$ reaches 1.0 for the number of samples defined by~(\ref{eq:samples}).
The number of samples required to identify a model in practice scales linearly in $\truemaxdeg$, the maximum number of dependencies affecting any labeling function.}
\label{fig:scaling_overlapping}
\end{center}
\vskip -0.2in
\end{figure}

We next test the effect of varying $\truemaxdeg$, the maximum number of dependencies that affect a labeling function in the true distribution.
For $25$ labeling functions, we add correlations to the true model to form cliques of increasing degree.
All parameters are the same as in the previous experiment.
Figure~\ref{fig:scaling_overlapping} shows that for increasing values of $\truemaxdeg$, (\ref{eq:samples}) again predicts the number of samples for Algorithm~\ref{alg:method} to succeed.
That the curves are aligned for different values of $\truemaxdeg$ shows that the sample complexity in practice scales linearly in $\truemaxdeg$.

\subsection{Comparison with Maximum Likelihood}
\label{sec:comparison}

We next compare Algorithm~\ref{alg:method} with an alternative approach.
Without an efficient structure learning method, one could maximize the marginal likelihood of the observations $\lfoutobs$ while considering all possible dependencies.
To measure the benefits of maximizing the marginal pseudolikelihood, we compare its performance against an analogous maximum likelihood estimation routine that also uses stochastic gradient descent, but instead uses Gibbs sampling to estimate the intractable gradient of the objective.

We create different distributions over $n$ labeling functions by selecting with probability $0.05$ pairs of labeling functions to make correlated.
Again, the strength of correlation is set at $\theta^\text{Cor}_{\ilf \ilfalt} = 0.25$ and accuracy is set at $\theta^\acc_\ilf =1.0$.
We generate 100 distributions for $\nlf \in \{25, 30, 35, \dots, 100\}$.
For each distribution we generate 10,000 samples and attempt to recover the true correlation structure.

\begin{figure}[t]
\begin{center}
\centerline{\includegraphics[width=.75\columnwidth]{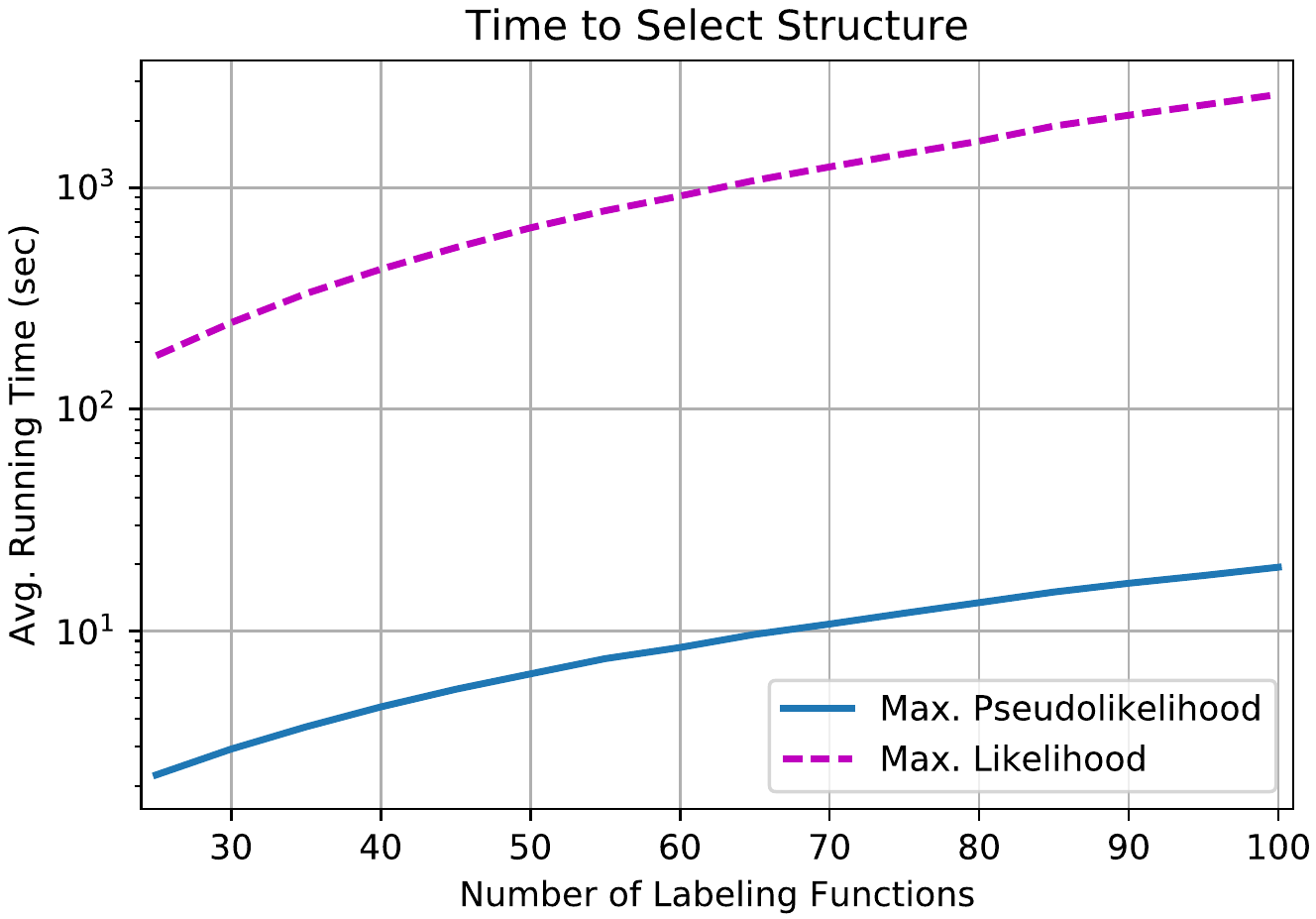}}
\caption{Comparison of structure learning with using maximum likelihood parameter estimation to select a  model structure.
Structure learning is two orders of magnitude faster.}
\label{fig:comparison_speed}
\end{center}
\vskip -0.2in
\end{figure}

We first compare running time between the two methods.
Our implementation of maximum likelihood estimation is designed for speed: for every sample taken to estimate the gradient, a small update to the parameters is performed.
This approach is state-of-the-art for high-speed learning for factor graphs \citep{zhang:vldb14}.
However, the need for sampling the variables $\lfout$ and $\Y$ is still computationally expensive.
Figure~\ref{fig:comparison_speed} shows that by avoiding variable sampling, using pseudolikelihood is 100$\times$ faster.

We next compare the accuracy of the two methods, which depends on the regularization $\threshold$.
The ideal is to maximize the probability of perfect recall with few extraneous correlations, because subsequent parameter estimation can reduce the influence of an extraneous correlation but cannot discover a  missing correlation.
We tune $\threshold$ independently for each method.
Figure~\ref{fig:comparison_accuracy} (top) shows that maximum pseudolikelihood is able to maintain higher levels of recall than maximum likelihood as the problem size increases.
Figure~\ref{fig:comparison_accuracy} (bottom) shows that even tuned for better recall, maximum pseudolikelihood is more precise, returning $1/4$ as many extraneous correlations.
We interpret this improved accuracy as a benefit of computing the gradient for a data point exactly, as opposed to using Gibbs sampling to estimate it as in maximum likelihood estimation.

\begin{figure}[t]
\begin{center}
\centerline{\includegraphics[width=.75\columnwidth]{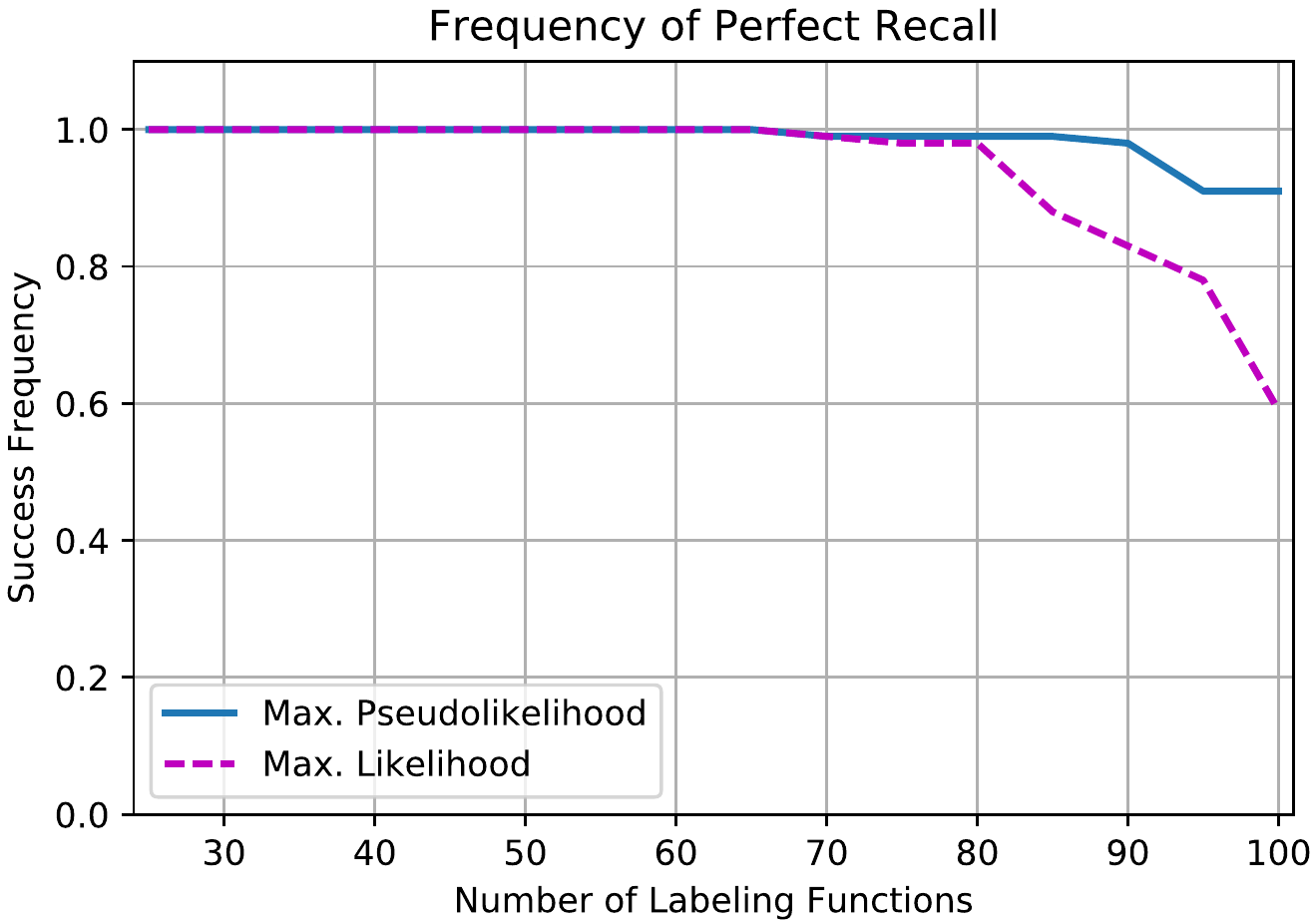}}
\vskip 0.1in
\centerline{\includegraphics[width=.75\columnwidth]{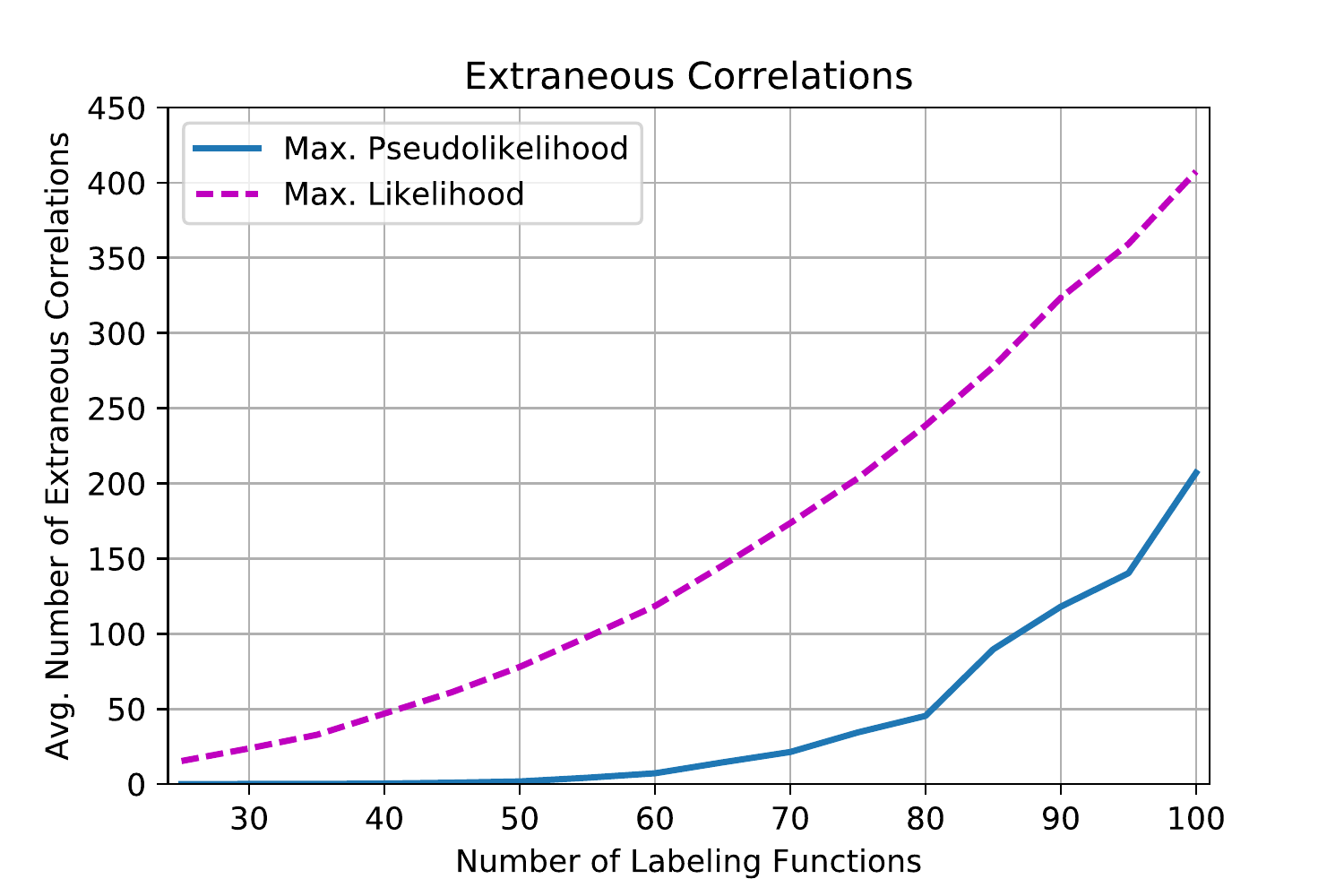}}
\caption{Comparison of structure learning with using maximum likelihood parameter estimation to select a  model structure.
Even when tuned for better recall (top), structure learning is also more precise, returning $1/4$ as many extraneous correlations (bottom).}
\label{fig:comparison_accuracy}
\end{center}
\vskip -0.2in
\end{figure}

\subsection{Real-World Applications}
\label{sec:realworld}

We evaluate our method on several real-world information extraction applications, comparing 
the performance of data programming using 
dependencies selected by our method with the conditionally independent model (Table~\ref{tab:apps}).
In the data programming method, users express a variety of weak supervision rules and sources such as 
regular expression patterns, distant supervision from dictionaries and existing knowledge bases, 
and other heuristics as labeling functions.
Due to the noisy and overlapping nature of these labeling functions,
correlations arise.
Learning this correlation structure gives an average improvement of 1.5 F1 points.

Extracting structured information from unstructured text by identifying mentioned entities and relations is a challenging task that is well studied in the context of weak supervision~\cite{bunescu:acl07, alfonseca:acl12, ratner:nips16}.
We consider three tasks: extracting mentions of specific diseases from
the scientific literature (\textit{Disease Tagging}); extracting mentions of
chemicals inducing diseases from the scientific literature (\textit{Chemical-Disease});
and extracting mentions of electronic device polarity from PDF parts sheet tables
(\textit{Device Polarity}).
In the first two applications, we consider a training set of 500 unlabeled abstracts from PubMed,
and in the third case 100 PDF parts sheets consisting of mixed text and tabular data.
We use hand-labeled test sets to evaluate on the candidate-mention-level performance, which is the accuracy of the classifier in identifying correct mentions of
specific entities or relations, given a set of candidate mentions.
For example, in Chemical-Disease, we consider as candidates all pairs of co-occurring chemical-disease mention pairs as identified by standard preprocessing tools\footnote{\tiny{\url{ncbi.nlm.nih.gov/CBBresearch/Lu/Demo/PubTator/index.cgi}}}.

We see that modeling the correlations between labeling functions gives gains in performance which appear to be correlated with the total number of sources.
For example, in the disease tagging application, we have 233 labeling functions, the majority of which check for membership
in specific subtrees of a reference disease ontology using different matching heuristics.
There is overlap in the labeling functions which check identical subtrees of the ontology, and we see that our method
increases end performance by a significant 2.6 F1 points by modeling this structure.

Examining the Chemical-Disease task, we see that our method identifies correlations that are
both obviously true and ones that are more subtle.
For example, our method learns dependencies between labeling functions that are
compositions of one another, such as one labeling function checking for the pattern \texttt{[CHEM] induc.* [DIS]},
and a second labeling function checking for this pattern plus membership in an external knowledge base of known
chemical-disease relations.
Our method also learns more subtle correlations: for example, it selected a correlation
between a labeling function that checks for the presence of a chemical mention in between the chemical and disease mentions comprising the candidate, and one that checks for the pattern \texttt{.*-induced} appearing in between.

\begin{table*}[t]
\caption{Candidate-mention scores of information extraction applications trained with data programming using generative models with no dependency structure (\textit{Independent}) and learned dependency structure
(\textit{Structure}).}
\label{tab:apps}
\vskip 0.15in
\begin{center}
\begin{small}
\begin{sc}
\begin{tabular}{lccccccccccccc}
\toprule
\multirow{2}{*}{Application} & \phantom{} & \multicolumn{3}{c}{Independent} & \phantom{} & \multicolumn{3}{c}{Structure} & \phantom{} & \multirow{2}{*}{F1 Diff.} & \multirow{2}{*}{\# LFs} & \multirow{2}{*}{\# Cor.} & \multirow{2}{*}{\% Corr.} \\
& & P & R & F1 & & P & R & F1 & & & & & \\
\midrule
\belowspace
Disease Tagging & & 60.4 & 73.3 & 66.3 & & 68.0 & 69.8 & 68.9 & & 2.6 & 233 & 315 & \phantom{0}1.17\%\\
\belowspace
Chemical-Disease & & 45.1 & 69.2 & 54.6 & & 46.8 & 69.0 & 55.9 & & 1.3 & 33 & 21 & \phantom{0}3.98\%\\
Device Polarity & & 78.9 & 99.6 & 88.1 & & 80.5 & 98.6 & 88.7 & & 0.6 & 12 & 32 & 48.49\%\\
\bottomrule
\end{tabular}
\end{sc}
\end{small}
\end{center}
\vskip -0.1in
\end{table*}

\subsection{Accelerating Application Development}
\label{sec:catastrophe}

Our method is in large part motivated by the new programming model
introduced by weak supervision, and the novel hurdles that developers face.
For example in the Disease Tagging application above,
we observed developers significantly slowed down in trying to 
to leverage the rich disease ontologies and matching heuristics they had available
without introducing too many dependencies between their labeling functions.
In addition to being slowed down, we also observed
developers running into significant pitfalls due to unnoticed correlations between their
weak supervision sources.
In one collaborator's application, for every labeling function that referenced the words
in a sentence, a corresponding labeling function referenced the lemmas, which were often identical, and this 
significantly degraded performance.
By automatically learning dependencies, we were able to significantly
mitigate the effects of such correlations.
We therefore envision an accelerated development process enabled by our method.

To further explore the way in which our method can protect against such types of failure modes, we consider adding correlated, random labeling functions to those used in the Chemical-Disease task.
Figure~\ref{fig:catastrophe} shows the average estimated accuracy of copies of a random labeling function.
An independent model grows more confident that the random noise is accurate.
However, with structure learning, we identify that the noisy sources are not independent and they therefore do not outvote the real labeling functions.
In this way, structure learning can protect against failures as users experiment with sources of weak supervision.

\begin{figure}[t]
\begin{center}
\centerline{\includegraphics[width=.75\columnwidth]{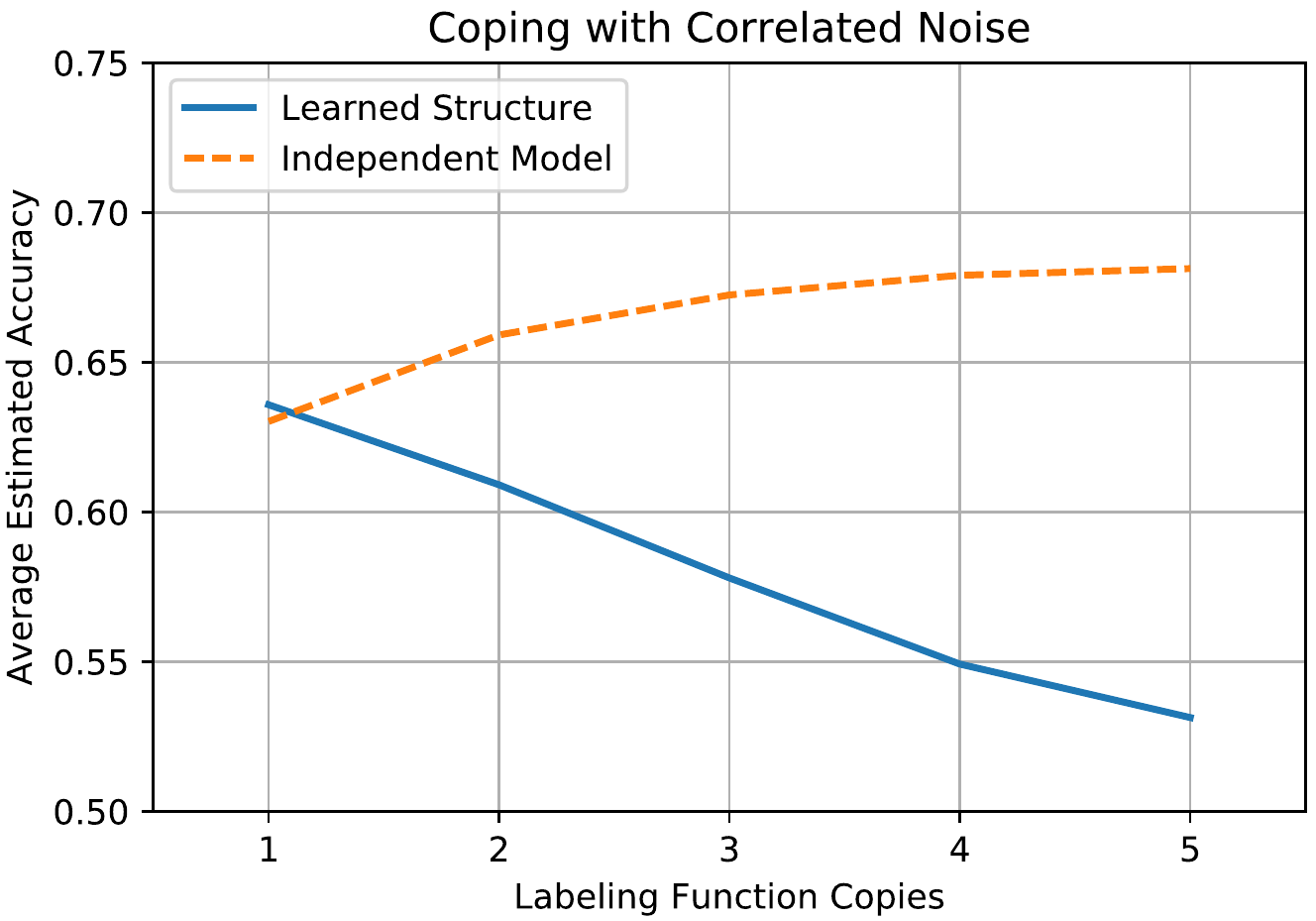}}
\caption{Structure learning identifies and corrects correlated, random labeling functions added to the Chemical-Disease task.}
\label{fig:catastrophe}
\end{center}
\vskip -0.4in
\end{figure}

%!TEX root = bach-icml17.tex

\section{Related Work}
\label{sec:related}

Structure learning is a well-studied problem, but most work has assumed access to hand-labeled training data.
Some of the earliest work has focused on generalized linear models.
The lasso \citep{tibshirani:royalstats96}, linear regression with $\ell_1$ regularization, is a classic technique.
\citet{zhao:jmlr06} showed that the lasso is a consistent structure estimator.
The Dantzig selector \citep{candes:annalsofstats07} is another structure estimator for linear models that uses $\ell_1$, which can learn in the high-dimensional setting where there are more possible dependencies than samples.
\citet{ng:icml04} showed that $\ell_1$-regularized logistic regression has sample complexity logarithmic in the number of features.
$\ell_1$ regularization has also been used as a prior for compressed sensing \citep[e.g.,][]{donoho:pnas03, tropp:infotheory06, wainwright:infotheory09}.

Regularized estimators have also been used to select structures for graphical models.
\citet{meinshausen:annalsofstats06} showed that parameter learning with $\ell_1$ regularization for Gaussian graphical models under similar assumptions also consistently selects the correct structure.
Most similar to our proposed estimator, \citet{ravikumar:annalsofstats10} propose a fully supervised pseudolikelihood estimator for Ising models.
Also related is the work of \citet{chandrasekaran:annalsofstats12}, which considers learning the structure of Gaussian graphical models with latent variables.
Other techniques for learning the structure of graphical models include grafting \citep{perkins:jmlr03, zhu:kdd10} and the information bottleneck approach for learning Bayesian networks with latent variables \citep{elidan:jmlr05}.

Using heuristic sources of labels is increasingly common.
Treating labels from a single heuristic source as gold labels is called distant supervision \citep{craven:ismb99, mintz:acl09}.
Some methods use multi-instance learning to reduce the noise in a distant supervision source \citep{riedel:ecml10, hoffmann:acl11}.
Others use hierarchical topic models to generate additional training data for weak supervision, but they do not support user-provided heuristics \citep{alfonseca:acl12, takamatsu:acl12, roth:cikm13, roth:emnlp13}.
Previous methods that support heuristics for weak supervision \citep[e.g.,][]{bunescu:acl07, shin:vldb15} do not model the noise inherent in these sources.
Also, \citet{downey:nips08} showed that PAC learning is possible without hand-labeled data if the features monotonically order data by class probability.

Estimating the accuracy of multiple label sources without a gold standard is a classic problem \citep{dawid:royalstats79}, and many proposed approaches are generalized in the data programming framework.
\citet{parisi:pnas14} proposed a spectral approach to estimating the accuracy of members of classifier ensembles.
Many methods for crowdsourcing estimate the accuracy of workers without hand-labeled data \citep[e.g.,][]{dalvi:www13, joglekar:icde15, zhang:jmlr16}.
In data programming, the scaling of data to label sources is different from crowdsourcing; a relatively small number of sources label all the data.
We can therefore learn rich dependency structures among the sources.

\vspace{-.4em}
%!TEX root = bach-icml17.tex

\section{Conclusion and Future Directions}

We showed that learning the structure of a generative model enables higher quality data programming results.
Our method for structure learning is also 100$\times$ faster than a maximum likelihood approach.
If data programming and other forms of weak supervision are to make machine learning tools easier to develop, selecting accurate structures for generative models with minimal user intervention is a necessary capability.
Interesting questions remain.
Can the guarantee of Theorem~\ref{thmsparsistency} be tightened for higher-order dependencies to match the pairwise case of Corollary~\ref{corsublinear}?
Preliminary experiments show that they converge at similar rates in practice.

% Acknowledgements should only appear in the accepted version. 
\section*{Acknowledgements}

Thanks to Christopher De Sa for helpful discussions, and Henry Ehrenberg and Sen Wu for assistance with experiments.
We gratefully acknowledge the support of the Defense Advanced Research Projects Agency (DARPA) SIMPLEX program under No. N66001-15-C-4043, the DARPA D3M program under No. FA8750-17-2-0095, the National Science Foundation (NSF) CAREER Award under No. IIS- 1353606, the Office of Naval Research (ONR) under awards No. N000141210041 and No. N000141310129, a Sloan Research Fellowship, the Moore Foundation, an Okawa Research Grant, Toshiba, and Intel. Any opinions, findings, and conclusions or recommendations expressed in this material are those of the authors and do not necessarily reflect the views of DARPA, NSF, ONR, or the U.S. government.

\bibliography{bach-icml17}
\bibliographystyle{icml2017}

\clearpage

\onecolumn
\appendix

\section{Proofs}

In this appendix, we provide proofs for Theorem~\ref{thmsparsistency} and Corollary~\ref{corsublinear} from the main text.
In Section~\ref{app: outline}, we provide an outline of the proof and state several lemmas.
In Section~\ref{app: thm}, we prove Theorem~\ref{thmsparsistency}.
In Section~\ref{app: cor}, we prove Corollary~\ref{corsublinear}, which follows directly from Theorem~\ref{thmsparsistency}.
In Section~\ref{app: lem}, we prove the lemmas stated in Section~\ref{app: outline}.

\subsection{Outline and Lemma Statements}
\label{app: outline}

\subsubsection{Outline of Theorem \ref{thmsparsistency} Proof}
We first show that the negative marginal log-pseudolikelihood is strongly convex under condition \eqref{eq: cov}.
In particular, in Lemma~\ref{lemma:grad_hessian}, we derive the gradient and Hessian of each term of the negative marginal log-pseudolikelihood,
and in Lemma~\ref{lemma:convex}, we show that the negative marginal log-pseudolikelihood is strongly convex under condition \eqref{eq: cov}.

Next, in Lemma~\ref{lemma:small}, we show that, under condition \eqref{eq: well-specified}, the gradient of the negative marginal log-pseudolikelihood at the true parameter $\param^*$ is small with high probability.

Finally, we show that if we run SGD until convergence and then truncate, we will recover the exact sparsity structure with high probability.
In Lemma~\ref{lemma:close}, we show that if the true parameter $\param^*$ has a small gradient, then the empirical minimum $\hat \param$ will be close to it, and in Lemma~\ref{lemma:truncate}, we show that the correct sparsity structure is recovered.

\subsubsection{Lemma Statements}
We now formally state the lemmas used in our proof.
\begin{restatable}{lemma}{lemmagradhessian}
  \label{lemma:grad_hessian}
  Given a family of maximum-entropy distributions
  \begin{align*}
    \p_\param(x) = \frac{1}{Z_\param}\exp(\param^T\dep(x)),
  \end{align*}
  for some function of sufficient statistics $h\colon \Omega \rightarrow \mathbb{R}^M$, if we let $J$ be the negative log-pseudolikelihood objective for some event $A\subseteq \Omega$,
  \begin{align*}
    J(\param) = -\log p_{x\sim\p_\param}(x\in A\mid \lfout_{\setminus \ilf}),
  \end{align*}
  then its gradient is
  \begin{align*}
    \nabla J(\theta) = -\mathbb{E}_{x\sim\p_\theta}\left[\dep(x)\mid x\in A, \lfout_{\setminus \ilf}\right]
                     + \mathbb{E}_{x\sim\p_\theta}\left[\dep(x)\mid \lfout_{\setminus \ilf}\right]
  \end{align*}
  and its Hessian is
  \begin{align*}
    \nabla^2 J(\theta) = -\Cov_{x\sim\p_\theta}\left[\dep(x)\mid x\in A, \lfout_{\setminus \ilf}\right]
                       + \Cov_{x\sim\p_\theta}\left[\dep(x)\mid \lfout_{\setminus \ilf}\right]
  \end{align*}
\end{restatable}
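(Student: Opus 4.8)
\textbf{Proof proposal for Lemma~\ref{lemma:grad_hessian}.}

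The plan is to compute the gradient and Hessian directly from the definition of the negative log-pseudolikelihood objective $J$, using standard exponential-family identities. The key observation is that $J(\param)$ can be written purely in terms of partition-function-like normalizers, and that derivatives of log-normalizers of exponential families yield conditional moments of the sufficient statistics. Throughout, the conditioning on $\lfout_{\setminus \ilf}$ is carried along as a fixed event; it plays no active role in the differentiation and only restricts the support over which we sum.

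First I would rewrite the objective. Since $\p_\param(x) = \frac{1}{Z_\param}\exp(\param^T \dep(x))$, conditioning on the observed event $B \defeq \{\lfout_{\setminus \ilf} = \lfoutobs_{\setminus \ilf}\}$ gives $\p_\param(x \mid B) = \exp(\param^T\dep(x))\,\mathbbm{1}\{x \in B\} / Z_\param^B$ where $Z_\param^B \defeq \sum_{x \in B}\exp(\param^T\dep(x))$, and similarly $\p_\param(x \in A \mid B) = Z_\param^{A\cap B}/Z_\param^B$ with $Z_\param^{A\cap B} \defeq \sum_{x \in A \cap B}\exp(\param^T\dep(x))$. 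Hence
\begin{align*}
J(\param) = -\log \p_\param(x \in A \mid B) = \log Z_\param^B - \log Z_\param^{A\cap B}.
\end{align*}
This reduces the problem to differentiating two log-normalizers of the same functional form.

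Next I would apply the exponential-family moment identities. For any event $C$ with $Z_\param^C = \sum_{x\in C}\exp(\param^T\dep(x))$, a direct computation gives $\nabla \log Z_\param^C = \mathbb{E}_{x\sim\p_\param}[\dep(x) \mid x \in C]$ and $\nabla^2 \log Z_\param^C = \Cov_{x\sim\p_\param}[\dep(x) \mid x \in C]$; these are the classical cumulant-generating-function identities, with the extra conditioning on $C$ handled by restricting the summation domain (equivalently, by absorbing $\mathbbm{1}\{x\in C\}$ into a modified base measure). Applying this with $C = B$ and with $C = A \cap B$, and using linearity of the gradient and Hessian on $J(\param) = \log Z_\param^B - \log Z_\param^{A\cap B}$, yields
\begin{align*}
\nabla J(\param) &= \mathbb{E}_{x\sim\p_\param}[\dep(x) \mid \lfout_{\setminus \ilf}] - \mathbb{E}_{x\sim\p_\param}[\dep(x) \mid x \in A, \lfout_{\setminus \ilf}], \\
\nabla^2 J(\param) &= \Cov_{x\sim\p_\param}[\dep(x) \mid \lfout_{\setminus \ilf}] - \Cov_{x\sim\p_\param}[\dep(x) \mid x \in A, \lfout_{\setminus \ilf}],
\end{align*}
which is exactly the claimed form (the conditioning event $x \in A \cap B$ being written as the pair of conditions $x \in A$ and $\lfout_{\setminus \ilf}$).

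This lemma is essentially a bookkeeping exercise rather than a deep result, so there is no serious obstacle; the only thing to be careful about is the treatment of the conditioning events. I would make sure the identities $\nabla \log Z_\param^C = \mathbb{E}[\,\cdot \mid C]$ and $\nabla^2 \log Z_\param^C = \Cov[\,\cdot \mid C]$ are justified cleanly — either by writing them out as explicit sums and differentiating term by term (finite sums over $\Omega$, so interchange of sum and derivative is immediate), or by noting that $\{x \in C\}\exp(\param^T\dep(x))$ is itself an unnormalized exponential family with base measure $\mathbbm{1}\{x \in C\}$ and invoking the standard result. The mild subtlety is that one must assume $Z_\param^{A\cap B} > 0$, i.e. that the conditioning event has positive probability, so that $J$ is well-defined and differentiable; this holds in our setting since all configurations have positive probability under $\p_\param$ for finite $\param$.
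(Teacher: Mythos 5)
Your proposal is correct and follows essentially the same route as the paper's proof: both rewrite $J(\param)$ as a difference of log-normalizers restricted to the events $\{x\in A,\,\lfout_{\setminus\ilf}\}$ and $\{\lfout_{\setminus\ilf}\}$ and then differentiate, the only difference being that the paper carries out the quotient-rule computation explicitly while you invoke (and correctly note how to verify) the standard cumulant identities $\nabla \log Z_\param^C = \mathbb{E}[\dep(x)\mid C]$ and $\nabla^2 \log Z_\param^C = \Cov[\dep(x)\mid C]$. Your added remark about well-definedness ($Z_\param^{A\cap B}>0$) is a fine, if minor, point that the paper leaves implicit.
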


\begin{restatable}{lemma}{lemmaconvex}
  \label{lemma:convex}
  Let $J$ be the empirical negative log-pseudolikelihood objective for the event $\lfout_\ilf = \lfoutobs_\ilf$
  \begin{align*}
    J(\param) = -\sum_{\iy=1}^{\ny}\left[\log p_{x\sim\p_\param}(\lfout_{\iy\ilf} = \lfoutobs_{\iy\ilf} \mid \lfout_{\iy\setminus\ilf} = \lfoutobs_{\iy\setminus\ilf})\right].
  \end{align*}
  Let $\Param_\ilf$ denote the set of parameters corresponding to dependencies incident on either labeling function $\lf_\ilf$ or the true label $\y$, and
  let $\Param_{\setminus\ilf}$ denote all the set of all remaining parameters.

  Then, $J(\param)$ is independent of the variables in $\Param_{\setminus \ilf}$, and under condition \eqref{eq: cov}, $J(\param)$ is strongly convex on the variables in $\Param_\ilf$ with a parameter of strong convexity of $c$.
\end{restatable}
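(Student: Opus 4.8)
The plan is to prove both assertions by reducing them to a single-example computation. Since the generative model~\eqref{eq:general} factorizes over the data index $\iy$ — its exponent is a sum of per-example terms, so $\p_\param(\lfout,\Y) = \prod_{\iy=1}^{\ny} q_\param(\lfout_\iy,\y_\iy)$ for a common single-example maximum-entropy distribution $q_\param$ — the objective splits as $J(\param) = \sum_{\iy=1}^{\ny} J_\iy(\param)$ with $J_\iy(\param) = -\log q_\param(\lfout_{\iy\ilf}=\lfoutobs_{\iy\ilf}\mid \lfout_{\iy\setminus\ilf}=\lfoutobs_{\iy\setminus\ilf})$, and I will argue term by term.

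\emph{Independence from $\Param_{\setminus\ilf}$.} I would write each term as the negative log of a ratio of restricted partition functions,
\begin{align*}
q_\param(\lfout_{\iy\ilf}=\lfoutobs_{\iy\ilf}\mid \lfout_{\iy\setminus\ilf}=\lfoutobs_{\iy\setminus\ilf})
= \frac{\sum_{\y_\iy}\exp\big(\sum_{\deptype,\lfsubset}\param^\deptype_\lfsubset\,\dep^\deptype_\lfsubset((\lfoutobs_{\iy\ilf},\lfoutobs_{\iy\setminus\ilf}),\y_\iy)\big)}{\sum_{\lfout_{\iy\ilf},\,\y_\iy}\exp\big(\sum_{\deptype,\lfsubset}\param^\deptype_\lfsubset\,\dep^\deptype_\lfsubset((\lfout_{\iy\ilf},\lfoutobs_{\iy\setminus\ilf}),\y_\iy)\big)},
\end{align*}
and then observe that for any dependency $(\lfsubset,\deptype)\notin\Dep_\ilf$ — one incident on neither $\lf_\ilf$ nor $\y$ — the feature $\dep^\deptype_\lfsubset$ is a function of $\lfoutobs_{\iy\setminus\ilf}$ alone, hence constant across the summation ranges of both the numerator and the denominator. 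The factor $\exp(\param^\deptype_\lfsubset\,\dep^\deptype_\lfsubset(\cdot))$ then pulls out of both sums and cancels, so $J_\iy$, and therefore $J$, does not depend on $\param^\deptype_\lfsubset$. Hence $J$ is a function of the $\Param_\ilf$ coordinates only.

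\emph{Strong convexity on $\Param_\ilf$.} I would apply Lemma~\ref{lemma:grad_hessian} to each $J_\iy$ with sufficient-statistic vector $\dep(\lfout_\iy,\y_\iy) = (\dep^\deptype_\lfsubset(\lfout_\iy,\y_\iy))_{\deptype,\lfsubset}$, event $A = \{\lfout_{\iy\ilf}=\lfoutobs_{\iy\ilf}\}$, and conditioning event $\{\lfout_{\iy\setminus\ilf}=\lfoutobs_{\iy\setminus\ilf}\}$; since $A$ together with the conditioning event is the event $\{\lfout_\iy=\lfoutobs_\iy\}$, summing the per-example Hessians yields
\begin{align*}
\nabla^2 J(\param) = \sum_{\iy=1}^{\ny}\Big(\Cov_{(\lfout,\Y)\sim\p_\param}\!\big[\dep(\lfout_\iy,\y_\iy)\mid\lfout_{\iy\setminus\ilf}=\lfoutobs_{\iy\setminus\ilf}\big] - \Cov_{(\lfout,\Y)\sim\p_\param}\!\big[\dep(\lfout_\iy,\y_\iy)\mid\lfout_\iy=\lfoutobs_\iy\big]\Big).
\end{align*}
Restricting to the coordinates indexed by $\Param_\ilf$, the remaining coordinates of $\dep$ — exactly those with $(\lfsubset,\deptype)\notin\Dep_\ilf$ — are deterministic once we condition on $\lfout_{\iy\setminus\ilf}$ and so contribute zero rows and columns to both covariance matrices; thus $\nabla^2 J(\param)$ is block diagonal, vanishing on $\Param_{\setminus\ilf}$ (consistent with the first part) and equal on the $\Param_\ilf$ block to $\sum_\iy\Cov_{\p_\param}[\Dep_\ilf\mid\lfout_{\iy\setminus\ilf}=\lfoutobs_{\iy\setminus\ilf}] - \sum_\iy\Cov_{\p_\param}[\Dep_\ilf\mid\lfout_\iy=\lfoutobs_\iy]$. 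This is exactly the difference between the right-hand and left-hand covariance sums appearing in~\eqref{eq: cov}, which that condition bounds below by $cI$, so $\nabla^2 J(\param)\succeq cI$ on $\Param_\ilf$, i.e.\ $J$ is strongly convex there with parameter $c$.

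\textbf{Main obstacle.} The work here is essentially bookkeeping rather than analysis; the two points needing care are (i) verifying that the sufficient-statistic coordinates lying outside $\Dep_\ilf$ are precisely the ones rendered deterministic by conditioning on $\lfout_{\iy\setminus\ilf}$, so that the Hessian genuinely decouples into a $\Param_\ilf$ block and a zero block, and (ii) matching the signs and the two conditioning events produced by Lemma~\ref{lemma:grad_hessian} with the form of condition~\eqref{eq: cov}. The factorization of $\p_\param$ over $\iy$, which licenses the term-by-term use of Lemma~\ref{lemma:grad_hessian}, is immediate from~\eqref{eq:general}.
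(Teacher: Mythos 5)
Your proposal is correct and follows essentially the same route as the paper's proof: independence from $\Param_{\setminus\ilf}$ by pulling the factors for dependencies not incident on $\lf_\ilf$ or $\y$ out of both restricted sums so they cancel, and strong convexity by applying Lemma~\ref{lemma:grad_hessian} per data point with $A=\{\lfout_{\iy\ilf}=\lfoutobs_{\iy\ilf}\}$, summing the Hessians, and invoking condition~\eqref{eq: cov} on the $\Param_\ilf$ block. If anything, you are slightly more explicit than the paper in justifying why those factors are constant over the summation ranges and why the excluded coordinates contribute zero rows and columns to the covariances.
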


\begin{restatable}{lemma}{lemmasmall}
  \label{lemma:small}
  Let $d_\ilf$ be the number of dependencies that involve either $\lf_\ilf$ or $\y$, and let $\param^*$ be the true parameter specified by condition \eqref{eq: well-specified}.
  Define $W$ as the gradient of the negative log-pseudolikelihood of $\lf_\ilf$ at this point
  \begin{align*}
    W = -\nabla J(\param^*;X).
  \end{align*}

  Then, for any $\delta$.
  \begin{align*}
    \Pr(\|W\|_\infty \geq \delta) \leq 2d_\ilf\exp\left(\frac{-m\delta^2}{8}\right)
  \end{align*}
\end{restatable}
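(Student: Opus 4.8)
\textbf{Proof proposal for Lemma~\ref{lemma:small}.}

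The plan is to write $W$ as a coordinate-wise average of $m$ independent, mean-zero, bounded random variables and then combine Hoeffding's inequality with a union bound over coordinates. First I would apply Lemma~\ref{lemma:grad_hessian}: since the pseudolikelihood objective for $\lf_\ilf$ decomposes over the $m$ data points, with the event for datum $\iy$ being $\lfout_{\iy\ilf} = \lfoutobs_{\iy\ilf}$, the lemma expresses $W = -\nabla J(\param^*)$ (with $J$ the empirical objective averaged over the $m$ data points, consistent with the step size $\stepsize = \ny^{-1}$) as $W = \frac{1}{m}\sum_{\iy=1}^{m} W_\iy$, where
\[
W_\iy \defeq \mathbb{E}_{(\lfout,\Y)\sim p_{\param^*}}\!\left[\Dep_\ilf(\lfout_\iy,\y_\iy)\mid \lfout_\iy = \lfoutobs_\iy\right] - \mathbb{E}_{(\lfout,\Y)\sim p_{\param^*}}\!\left[\Dep_\ilf(\lfout_\iy,\y_\iy)\mid \lfout_{\iy\setminus\ilf} = \lfoutobs_{\iy\setminus\ilf}\right].
\]
By Lemma~\ref{lemma:convex}, $J$ does not depend on the parameters outside $\Param_\ilf$, so $W$ is supported on the $d_\ilf$ coordinates indexed by $\Param_\ilf$; it therefore suffices to control each of these $d_\ilf$ coordinates. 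Because the factor graph \eqref{eq:general} factors over data points, condition~\eqref{eq: well-specified} makes $\lfoutobs_1,\dots,\lfoutobs_m$ i.i.d.\ draws from the per-datum marginal of $\pi^* = p_{\param^*}$, so $W_1,\dots,W_m$ are i.i.d.

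The crux is showing that each coordinate of $W_\iy$ has mean zero, and this is where well-specification is essential. Writing $g(v) = \mathbb{E}_{\param^*}[\dep^\deptype_\lfsubset(\lfout_\iy,\y_\iy)\mid \lfout_\iy = v]$ for the first conditional expectation, the outer expectation averages $g$ against the true marginal law of $\lfout_\iy$, which under \eqref{eq: well-specified} is exactly the $p_{\param^*}$-marginal; the tower property then gives $\mathbb{E}[g(\lfout_\iy)] = \mathbb{E}_{\param^*}[\dep^\deptype_\lfsubset(\lfout_\iy,\y_\iy)]$. The identical argument applied to the second conditional expectation (conditioning instead on $\lfout_{\iy\setminus\ilf}$) yields the same value $\mathbb{E}_{\param^*}[\dep^\deptype_\lfsubset(\lfout_\iy,\y_\iy)]$, so the two terms cancel in expectation and $\mathbb{E}[W_\iy] = 0$. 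The two conditional expectations are dependent, being functions of overlapping pieces of the same datum, but only their difference needs to be mean zero and bounded, so this causes no difficulty.

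For boundedness: every dependency statistic under consideration --- the accuracy statistics $\dep^\acc_\ilf = y_\iy\lfout_{\iy\ilf}$, the correlation statistics $\dep^\cor_{\ilf\ilfalt} = \mathbbm{1}\{\lfout_{\iy\ilf} = \lfout_{\iy\ilfalt}\}$, and the conjunction statistics --- takes values in $[-1,1]$. Hence each conditional expectation lies in $[-1,1]$ and each coordinate of $W_\iy$ lies in an interval of width at most $4$. Applying Hoeffding's inequality to the $m$-term average in a fixed coordinate $\dep^\deptype_\lfsubset \in \Param_\ilf$ gives $\Pr(|W_{\deptype\lfsubset}| \geq \delta) \leq 2\exp(-2m\delta^2/16) = 2\exp(-m\delta^2/8)$, and a union bound over the at most $d_\ilf$ coordinates in $\Param_\ilf$ gives $\Pr(\|W\|_\infty \geq \delta) \leq 2d_\ilf\exp(-m\delta^2/8)$, as claimed.

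I expect the main obstacle to be the zero-mean step: justifying the tower-property cancellation requires invoking condition~\eqref{eq: well-specified} twice --- once to identify the marginal law of $\lfout_\iy$ and once for that of $\lfout_{\iy\setminus\ilf}$ --- and keeping straight which variables are conditioned on versus marginalized in each of the two expectations produced by Lemma~\ref{lemma:grad_hessian}. The remaining steps (the boundedness bookkeeping, the choice of Hoeffding constants, and the union bound) are routine.
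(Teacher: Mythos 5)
Your proposal is correct and follows essentially the same route as the paper's proof: write each coordinate of $W$ as an average of $m$ i.i.d.\ terms that are mean zero and bounded by $2$ in absolute value (the paper invokes two-sided Azuma where you use Hoeffding, with the identical exponent $-m\delta^2/8$), then union bound over the $d_\ilf$ relevant coordinates. Your tower-property justification of the zero-mean step via condition~\eqref{eq: well-specified} is in fact cleaner than the paper's, which attributes it (likely a mis-citation) to condition~\eqref{eq: cov}.
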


\begin{restatable}{lemma}{lemmaclose}
  \label{lemma:close}
  Let $J$ be a $c$-strongly convex function in $d$ dimensions, and let $\hat\param$ be the minimizer of $J$.
  Suppose $\|J(\param^*)\|_\infty \leq \delta$.
  Then,
  \begin{align*}
    \|\hat\param - \param^*\|_\infty \leq \frac{\delta}{c}\sqrt{d}
  \end{align*}
\end{restatable}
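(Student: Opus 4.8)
I read the hypothesis ``$\|J(\param^*)\|_\infty \le \delta$'' as the statement $\|\nabla J(\param^*)\|_\infty \le \delta$ (the quantity $W$ of Lemma~\ref{lemma:small}); this is the only reading consistent with the rest of the argument. The plan is to combine $c$-strong convexity of $J$ with the first-order optimality condition at the minimizer. Since a $c$-strongly convex differentiable function is coercive and strictly convex, it has a unique minimizer, so $\hat\param$ is well defined and satisfies $\nabla J(\hat\param) = 0$.

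First I would record the gradient strong-monotonicity inequality implied by strong convexity. Writing the first-order lower bound
\[
J(v) \ge J(u) + \nabla J(u)^T (v - u) + \frac{c}{2}\|v - u\|_2^2
\]
once with $(u,v) = (\hat\param, \param^*)$ and once with $(u,v) = (\param^*, \hat\param)$ and adding the two yields
\[
\bigl(\nabla J(\hat\param) - \nabla J(\param^*)\bigr)^T (\hat\param - \param^*) \ge c\,\|\hat\param - \param^*\|_2^2 .
\]
Substituting $\nabla J(\hat\param) = 0$ and applying H\"older's inequality to the left-hand side gives
\[
c\,\|\hat\param - \param^*\|_2^2 \le -\nabla J(\param^*)^T (\hat\param - \param^*) \le \|\nabla J(\param^*)\|_\infty\,\|\hat\param - \param^*\|_1 \le \delta\sqrt{d}\,\|\hat\param - \param^*\|_2 ,
\]
where the last step uses $\|w\|_1 \le \sqrt{d}\,\|w\|_2$ for $w\in\mathbb{R}^d$. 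Dividing through by $\|\hat\param - \param^*\|_2$ (the bound being immediate when this is zero) gives $\|\hat\param - \param^*\|_2 \le \delta\sqrt{d}/c$, hence $\|\hat\param - \param^*\|_\infty \le \|\hat\param - \param^*\|_2 \le \delta\sqrt{d}/c$, which is the claim.

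I do not expect a genuine obstacle here; the proof is only a few lines. The two points that need a little care are: (i) using the gradient strong-monotonicity bound rather than a single strong-convexity inequality combined with $J(\hat\param) \le J(\param^*)$, since the latter route loses a factor of $2$ and does not match the stated constant; and (ii) pairing the $\ell_\infty$ bound on the gradient with the $\ell_1$ norm of $\hat\param - \param^*$ via H\"older, and only then passing to $\ell_2$, since a Cauchy--Schwarz estimate against $\|\nabla J(\param^*)\|_2$ would introduce a spurious dependence on $d$. Differentiability and well-definedness of $\hat\param$ come for free from the setting of Lemmas~\ref{lemma:grad_hessian} and~\ref{lemma:convex}, where $J$ is a smooth function of $\param$.
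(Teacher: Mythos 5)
Your proof is correct and follows essentially the same route as the paper's: both use the strong-monotonicity inequality $\left(\nabla J(\param^*) - \nabla J(\hat\param)\right)^T(\param^* - \hat\param) \geq c\|\param^* - \hat\param\|_2^2$ with $\nabla J(\hat\param) = 0$, pair the $\ell_\infty$ gradient bound with the $\ell_1$ norm, pass to $\ell_2$ via $\|w\|_1 \leq \sqrt{d}\,\|w\|_2$, and conclude with $\|\cdot\|_\infty \leq \|\cdot\|_2$. Your reading of the hypothesis as $\|\nabla J(\param^*)\|_\infty \leq \delta$ also matches how the paper uses it.
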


\begin{restatable}{lemma}{lemmatruncate}
  \label{lemma:truncate}
  Suppose that conditions \eqref{eq: well-specified}, \eqref{eq: cov}, and \eqref{eq: notzero} are satisfied.
  Suppose we run Algorithm~\ref{alg:method} with $\ny$ samples, a sufficiently small step size $\stepsize$, a sufficiently large number of epochs $\epochs$, and truncate once at the end with $\truncation\stepsize\threshold = \kappa/2$.
  Then, the probability that we fail to recover the exact sparsity structure is at most
  \begin{align*}
    2\nlf d\exp\left(\frac{-mc^2\kappa^2}{32d}\right).
  \end{align*}
\end{restatable}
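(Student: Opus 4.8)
The plan is to analyze each labeling function's inner loop in isolation and then union-bound over the $\nlf$ of them. Fix $\ilf$. By Lemma~\ref{lemma:convex}, the per-function objective $J$ depends only on the $d_\ilf$-dimensional block $\Param_\ilf$ of parameters incident on $\lf_\ilf$ or $\y$, and is $c$-strongly convex there; by Lemma~\ref{lemma:grad_hessian} its Hessian is a difference of covariances of bounded sufficient statistics, so $\nabla J$ is Lipschitz. Thus $J$ has a unique minimizer $\hat\param$ on $\Param_\ilf$, and by the standard guarantee for SGD on smooth, strongly convex objectives, a sufficiently small step size $\stepsize$ together with sufficiently many epochs $\epochs$ bring the pre-truncation iterate within any prescribed distance of $\hat\param$; I will treat it as exactly $\hat\param$. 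The single final truncation then subtracts $\truncation\stepsize\threshold = \kappa/2$ from the magnitude of each coordinate (flooring at $0$), and the support of the result is the structure recovered for $\lf_\ilf$.

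The resulting deterministic recovery condition is $\|\hat\param - \param^*\|_\infty < \kappa/2$: under it, every coordinate with $\param^*_i = 0$ has $|\hat\param_i| < \kappa/2$ and is zeroed, while every coordinate with $\param^*_i \neq 0$ has, by condition~\eqref{eq: notzero}, $|\hat\param_i| > \kappa - \kappa/2 = \kappa/2$ and survives with the correct sign. To control the error I apply Lemma~\ref{lemma:close} in the $d_\ilf$ dimensions of $\Param_\ilf$, obtaining $\|\hat\param - \param^*\|_\infty \le \frac{\sqrt{d_\ilf}}{c}\|W\|_\infty$ with $W = -\nabla J(\param^*;X)$, so it suffices that $\|W\|_\infty < \frac{c\kappa}{2\sqrt{d_\ilf}}$. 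Condition~\eqref{eq: well-specified} places us in the setting of Lemma~\ref{lemma:small}, which with $\delta = \frac{c\kappa}{2\sqrt{d_\ilf}}$ gives
\[
\Pr\left(\|W\|_\infty \ge \frac{c\kappa}{2\sqrt{d_\ilf}}\right) \;\le\; 2 d_\ilf \exp\left(-\frac{m}{8}\cdot\frac{c^2\kappa^2}{4 d_\ilf}\right) \;=\; 2 d_\ilf \exp\left(-\frac{m c^2\kappa^2}{32 d_\ilf}\right).
\]

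Since $x \mapsto 2x\exp(-mc^2\kappa^2/(32x))$ is increasing on $(0,\infty)$ and $d_\ilf \le d$, the failure probability of the $\ilf$-th subproblem is at most $2d\exp(-mc^2\kappa^2/(32d))$. If every subproblem succeeds, then every candidate dependency is classified correctly in every subproblem in which it appears, so $\depset$ equals the true structure; a union bound over $\ilf = 1,\dots,\nlf$ therefore yields overall failure probability at most $2\nlf d\exp(-mc^2\kappa^2/(32d))$, as claimed.

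The main obstacle I expect is making the first step rigorous: pinning down what ``sufficiently small $\stepsize$'' and ``sufficiently large $\epochs$'' actually buy, i.e.\ that the SGD output may be substituted for the exact minimizer $\hat\param$ without changing the truncation outcome, and reconciling this idealized ``truncate once'' description with the online truncated-gradient updates executed in Algorithm~\ref{alg:method}. A secondary but essential point is the bookkeeping that the relevant dimension in Lemma~\ref{lemma:close} is $d_\ilf$ rather than the ambient parameter count, since that is exactly what keeps the $\sqrt{d}$ factor---and hence the $32d$ in the exponent---as small as stated; the factor of two in $\kappa/2$ is forced by the need to separate ``zeroed'' from ``strictly surviving,'' which consumes the full gap $\kappa$ supplied by condition~\eqref{eq: notzero}.
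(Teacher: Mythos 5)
Your proposal is correct and follows essentially the same route as the paper's proof: bound $\|\nabla J(\param^*)\|_\infty$ via Lemma~\ref{lemma:small} with $\delta = c\kappa/(2\sqrt{d})$, convert that to $\|\hat\param - \param^*\|_\infty \leq \kappa/2$ via strong convexity (Lemma~\ref{lemma:convex}) and Lemma~\ref{lemma:close}, observe that truncation at $\kappa/2$ then recovers exactly the support guaranteed by condition~\eqref{eq: notzero}, and union bound over the $\nlf$ subproblems. You are in fact slightly more careful than the paper on two points it glosses over --- working with $d_\ilf$ and passing to $d$ by monotonicity of $x \mapsto 2x\exp(-a/x)$, and explicitly flagging the idealization of the SGD iterate as the exact minimizer $\hat\param$ and of the single end-of-run truncation --- but these refinements do not change the argument.
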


\subsection{Proof of Theorem~\ref{thmsparsistency}}
\label{app: thm}

\thmsparsistency*
\begin{proof}
  If follows from Lemma~\ref{lemma:truncate} that the probability that we fail to recover the sparsity structure is at most
  \begin{align*}
    2\nlf d\exp\left(\frac{-mc^2\kappa^2}{32d}\right).
  \end{align*}

  By using the provided $\ny$, the probability of failure is at most
  \begin{align*}
    2\nlf d\exp\left(\frac{-\frac{32d}{c^2\kappa^2}\log\left(\frac{2nd}{\delta}\right)c^2\kappa^2}{32d}\right)
    &=
    2\nlf d\exp\left(-\log\left(\frac{2nd}{\delta}\right)\right)
    =
    \delta.
  \end{align*}

  Thus, we will succeed with probability at least
    $1 - \delta$.
\end{proof}

\subsection{Proof of Corollary~\ref{corsublinear}}
\label{app: cor}

\corsublinear*
\begin{proof}
  In this case, each labeling function $\lf_\ilf$ is involved in $\nlf - 1$ with other labeling functions, and the true label $\y$ is involved in $\nlf$ dependencies.
  Thus, $d = (\nlf - 1) + \nlf < 2\nlf$.

  We can then apply Theorem~\ref{thmsparsistency} to show that the probability of success is at least $1 - \tau$ for the specified $\ny$.
\end{proof}

\subsection{Proofs of Lemmas}
\label{app: lem}

\lemmagradhessian*
\begin{proof}
  We first rewrite the netative log-pseudolikelihood as
  \begin{align*}
    J(\param) &= -\log\Pr_{x\sim\pi_\param}(x\in A\mid \lfout_{\setminus \ilf}) \\
              &= -\log\frac{\Pr_{x\sim\pi_\param}(x\in A, \lfout_{\setminus\ilf})}{\Pr_{x\sim\pi_\param}(\lfout_{\setminus \ilf})} \\
              &= -\log\frac{\sum_{x\in A, \lfout_{\setminus\ilf}}\p_\param(x)}{\sum_{x\in\lfout_{\setminus \ilf}}p_\param(x)} \\
              &= -\log\frac{\sum_{x\in A, \lfout_{\setminus\ilf}}\exp(\param^T\dep(x))}{\sum_{x\in\lfout_{\setminus \ilf}}\exp(\param^T\dep(x))} \\
              &= -\log\sum_{x\in A, \lfout_{\setminus\ilf}}\exp(\param^T\dep(x)) + \log \sum_{x\in\lfout_{\setminus \ilf}}\exp(\param^T\dep(x)).
  \end{align*}

  We now derive the gradient
  \begin{align*}
    \nabla J(\param)
              &= \nabla \left[-\log\sum_{x\in A, \lfout_{\setminus\ilf}}\exp(\param^T\dep(x)) + \log \sum_{x\in\lfout_{\setminus\ilf}}\exp(\param^T\dep(x))\right] \\
              &= -\nabla\log\sum_{x\in A, \lfout_{\setminus\ilf}}\exp(\param^T\dep(x)) + \nabla\log \sum_{x\in\lfout_{\setminus\ilf}}\exp(\param^T\dep(x)) \\
              &= - \frac{\sum_{x\in A, \lfout_{\setminus\ilf}}\dep(x)\exp(\param^T\dep(x))}{\sum_{x\in A, \lfout_{\setminus\ilf}}\exp(\param^T\dep(x))}
                 + \frac{\sum_{x\in\lfout_{\setminus\ilf}}\dep(x)\exp(\param^T\dep(x))}{\sum_{x\in\lfout_{\setminus\ilf}}\exp(\param^T\dep(x))} \\
              &= -\mathbb{E}_{x\sim\p_\param}\left[\dep(x)\mid x\in A, \lfout_{\setminus\ilf}\right]
                 +\mathbb{E}_{x\sim\p_\param}\left[\dep(x)\mid \lfout_{\setminus\ilf}\right]
  \end{align*}

  We now derive the Hessian
  \begin{align*}
    \nabla^2 J(\param)
              &= \nabla\left[- \frac{\sum_{x\in A, \lfout_{\setminus\ilf}}\dep(x)\exp(\param^T\dep(x))}{\sum_{x\in A, \lfout_{\setminus\ilf}}\exp(\param^T\dep(x))}
                 + \frac{\sum_{x\in\lfout_{\setminus\ilf}}\dep(x)\exp(\param^T\dep(x))}{\sum_{x\in\lfout_{\setminus\ilf}}\exp(\param^T\dep(x))}\right] \\
              &= -\nabla\frac{\sum_{x\in A, \lfout_{\setminus\ilf}}\dep(x)\exp(\param^T\dep(x))}{\sum_{x\in A, \lfout_{\setminus\ilf}}\exp(\param^T\dep(x))}
                 +\nabla\frac{\sum_{x\in\lfout_{\setminus\ilf}}\dep(x)\exp(\param^T\dep(x))}{\sum_{x\in\lfout_{\setminus\ilf}}\exp(\param^T\dep(x))} \\
              &= -\left(\frac{\sum_{x\in A, \lfout_{\setminus\ilf}}\dep(x)\dep(x)^T\exp(\param^T\dep(x))}{\sum_{x\in A, \lfout_{\setminus\ilf}}\exp(\param^T\dep(x))}
-
    \frac{\left(\sum_{x\in A, \lfout_{\setminus\ilf}}\dep(x)\exp(\param^T\dep(x))\right)\left(\sum_{x\in A, \lfout_{\setminus\ilf}}\dep(x)\exp(\param^T\dep(x))\right)^T}{\left(\sum_{x\in A, \lfout_{\setminus\ilf}}\exp(\param^T\dep(x))\right)^2}
    \right) \\
              &\phantom{{}={}} +\left(\frac{\sum_{x\in \lfout_{\setminus\ilf}}\dep(x)\dep(x)^T\exp(\param^T\dep(x))}{\sum_{x\in \lfout_{\setminus\ilf}}\exp(\param^T\dep(x))}
-
    \frac{\left(\sum_{x\in \lfout_{\setminus\ilf}}\dep(x)\exp(\param^T\dep(x))\right)\left(\sum_{x\in \lfout_{\setminus\ilf}}\dep(x)\exp(\param^T\dep(x))\right)^T}{\left(\sum_{x\in \lfout_{\setminus\ilf}}\exp(\param^T\dep(x))\right)^2}
    \right) \\
              &= -\left(\mathbb{E}_{x\sim\p_\param}\left[\dep(x)\dep(x)^T\mid x\in A, \lfout_{\setminus\ilf}\right]
              -\mathbb{E}_{x\sim\p_\param}\left[\dep(x)\mid x\in A, \lfout_{\setminus\ilf}\right]
              \mathbb{E}_{x\sim\p_\param}\left[\dep(x)\mid x\in A, \lfout_{\setminus\ilf}\right]^T\right) \\
               &\phantom{{}={}}+\left(\mathbb{E}_{x\sim\p_\param}\left[\dep(x)\dep(x)^T\mid x\in \lfout_{\setminus\ilf}\right]
              -\mathbb{E}_{x\sim\p_\param}\left[\dep(x)\mid x\in \lfout_{\setminus\ilf}\right]
              \mathbb{E}_{x\sim\p_\param}\left[\dep(x)\mid x\in \lfout_{\setminus\ilf}\right]^T\right) \\
              &= -\Cov_{x\sim\p_\param}\left[\dep(x)\mid x\in A, \lfout_{\setminus\ilf}\right]
                 +\Cov_{x\sim\p_\theta}\left[\dep(x)\mid \lfout_{\setminus\ilf}\right].
  \end{align*}
\end{proof}

\lemmaconvex*
\begin{proof}
  First, we show that $J(\param)$ is independent of the variables in $\Param_{\setminus \ilf}$.
  We simplify $J(\param)$ as
  \begin{align*}
    J(\param)
    &= \sum_{\iy=1}^{\ny}\left[-\log\sum_{x\in \lfoutobs}\exp(\param^T\dep(x)) + \log \sum_{x\in\lfoutobs_{\setminus \ilf}}\exp(\param^T\dep(x))\right] \\
    &= \sum_{\iy=1}^{\ny}\left[
      - \log \sum_{x\in \lfoutobs}\exp\left(\param_\ilf^T\dep_\ilf(x) + \param_{\setminus\ilf}^T\dep_{\setminus\ilf}(x)\right)
      + \log \sum_{x\in\lfoutobs_{\setminus \ilf}}\exp\left(\param_\ilf^T\dep_\ilf(x) + \param_{\setminus\ilf}^T\dep_{\setminus\ilf}(x)\right)
      \right] \\
    &= \sum_{\iy=1}^{\ny}\left[
      - \log \sum_{x\in \lfoutobs}\left(\exp\left(\param_\ilf^T\dep_\ilf(x)\right)\exp\left(\param_{\setminus\ilf}^T\dep_{\setminus\ilf}(x)\right)\right)
      + \log \sum_{x\in\lfoutobs_{\setminus \ilf}}\left(\exp\left(\param_\ilf^T\dep_\ilf(x)\right)\exp\left(\param_{\setminus\ilf}^T\dep_{\setminus\ilf}(x)\right)\right)
      \right] \\
    &= \sum_{\iy=1}^{\ny}\left[
      - \log \left(\exp\left(\param_{\setminus\ilf}^T\dep_{\setminus\ilf}(x)\right)\sum_{x\in \lfoutobs}\exp\left(\param_\ilf^T\dep_\ilf(x)\right)\right)
      + \log \left(\exp\left(\param_{\setminus\ilf}^T\dep_{\setminus\ilf}(x)\right)\sum_{x\in\lfoutobs_{\setminus \ilf}}\exp\left(\param_\ilf^T\dep_\ilf(x)\right)\right)
      \right] \\
    &= \sum_{\iy=1}^{\ny}\left[
      - \log \exp\left(\param_{\setminus\ilf}^T\dep_{\setminus\ilf}(x)\right) - \log \sum_{x\in \lfoutobs}\exp\left(\param_\ilf^T\dep_\ilf(x)\right)
      + \log \exp\left(\param_{\setminus\ilf}^T\dep_{\setminus\ilf}(x)\right) + \log \sum_{x\in\lfoutobs_{\setminus \ilf}}\exp\left(\param_\ilf^T\dep_\ilf(x)\right)
      \right] \\
    &= \sum_{\iy=1}^{\ny}\left[
      - \log \sum_{x\in \lfoutobs}\exp\left(\param_\ilf^T\dep_\ilf(x)\right)
      + \log \sum_{x\in\lfoutobs_{\setminus \ilf}}\exp\left(\param_\ilf^T\dep_\ilf(x)\right)
      \right],
  \end{align*}
  which does not depend on any variables in $\Param_{\setminus \ilf}$.

  Next, we prove that $J(\param)$ is $c$-strongly convex in the variabes in $\Param_{\ilf}$.
  By combining the previous result and Lemma \ref{lemma:grad_hessian}, we can derive the Hessian
  \begin{align*}
    \nabla^2 J(\Param_\ilf) 
      &=
      \sum_{\iy=1}^{\ny}\left[-\Cov_{(\lfout, Y)\sim p_{\theta}}(\Dep_\ilf(\lfout, Y)\mid \lfout_\iy = \lfoutobs_\iy)
      +
            \Cov_{(\lfout, Y)\sim p_{\theta}}(\Dep_\ilf(\lfout, Y)\mid \lfout_{\iy\setminus \ilf} = \lfoutobs_{\iy\setminus \ilf})\right] \\
      &=
      -\sum_{\iy=1}^{\ny}\Cov_{(\lfout, Y)\sim p_{\theta}}(\Dep_\ilf(\lfout, Y)\mid \lfout_\iy = \lfoutobs_\iy)
      +
            \sum_{\iy=1}^{\ny}\Cov_{(\lfout, Y)\sim p_{\theta}}(\Dep_\ilf(\lfout, Y)\mid \lfout_{\iy\setminus \ilf} = \lfoutobs_{\iy\setminus \ilf}).
  \end{align*}
  It then follows from condition \eqref{eq: cov} that
  \begin{align*}
    cI\preceq\nabla^2 J(\Param_\ilf),
  \end{align*}
  which implies that $J$ is $c$-strongly convex on variables in $\Theta_j$.
\end{proof}

\lemmasmall*
\begin{proof}
  From Lemma~\ref{lemma:grad_hessian}, we know that each element of $W$ can be written as the average of $\ny$ i.i.d. terms.
  From condition~\eqref{eq: cov}, we know that the terms have zero mean, and we also know that the terms are bounded in absolute value by 2, due to the fact that the dependencies have values falling in the interval $[-1,1]$.

  We can alternatively think of the average of the terms as the sum of $\ny$ i.i.d. zero-mean random variables that are bounded in absolute value by $\frac{2}{\ny}$.
  The two-sided Azuma's inequality bounds the probability that any term in $W$ is large.
  \begin{gather*}
    \Pr(|W_\ilf| \geq \delta)
    \leq 2\exp\left(\frac{-\delta^2}{2\sum_{\iy=1}^{\ny}\left(\frac{2}{\ny}\right)^2}\right)
    \leq 2\exp\left(\frac{-m\delta^2}{8}\right)
  \end{gather*}

  The union bound then bounds the probability that any component of $W$ is large.
  \begin{gather*}
    \Pr(\|W\|_\infty \geq \delta) \leq 2d_\ilf\exp\left(\frac{-m\delta^2}{8}\right)
  \end{gather*}
\end{proof}

\lemmaclose*
\begin{proof}
  Because $J$ is $c$-strongly convex,
  \begin{gather*}
    \left(\nabla J(\param^*) - \nabla J(\hat \param) \right)^T(\param^* - \hat \param) \geq c \|\param^* - \hat \param\|_2^2.
  \end{gather*}

  $\nabla J(\hat \param) = \mathbf{0}$, so
  \begin{gather*}
    \nabla J(\param^*)^T(\param^* - \hat \param) \geq c \|\param^* - \hat \param\|_2^2.
  \end{gather*}

  Then, because $\|J(\param^*)\|_\infty \leq \delta$,
  \begin{gather*}
    c \|\param^* - \hat \param\|_2^2 \leq \delta\|\param^* - \hat \param\|_1 \\
    \|\param^* - \hat \param\|_2^2 \leq \frac{\delta}{c}\|\param^* - \hat \param\|_1.
  \end{gather*}

  Then, we have that
  \begin{gather*}
    \|\param^* - \hat \param\|_2^2 \leq \frac{\delta^2}{c^2}d
    \|\param^* - \hat \param\|_2 \leq \frac{\delta}{c}\sqrt{d},
  \end{gather*}
  which implies that
  \begin{gather*}
    \|\param^* - \hat \param\|_\infty \leq \frac{\delta}{c}\sqrt{d}.
  \end{gather*}
\end{proof}

\lemmatruncate*
\begin{proof}
  First, we bound the probability that we fail to correctly recover the dependencies involving $\lf_\ilf$.
  By Lemma~\ref{lemma:small}, we can bound the probability that the gradient is large at $\param^*$ by
  \begin{align*}
    \Pr\left(\|W\|_\infty \geq \frac{c\kappa}{2\sqrt{d}}\right) &\leq 2d\exp\left(\frac{-mc^2\kappa^2}{32d}\right).
  \end{align*}

  Notice that if $\|W\|_\infty \geq \frac{c\kappa}{2\sqrt{d}}$, then $\|\param^* - \hat\param\|_\infty \leq \kappa/2$.
  If then follows from Lemma~\ref{lemma:close} that
  \begin{align*}
    \Pr\left(\|\param^* - \hat\param\|_\infty \geq \kappa/2\right)
    \leq
    2d\exp\left(\frac{-mc^2\kappa^2}{32d}\right).
  \end{align*}

  If this is the case, then upon truncation, the correct dependencies will be recovered for $\lf_\ilf$.
  We now use the union bound to show that we will fail to recover the exact sparsity structure with probability at most
  \begin{align*}
    2\nlf d\exp\left(\frac{-mc^2\kappa^2}{32d}\right).
  \end{align*}
\end{proof}

\end{document}